\newtheorem*{Theorem*}{Theorem}
\newtheorem{Theorem}{Theorem}[section]
\newtheorem{Lemma}[Theorem]{Lemma}
\newtheorem{Proposition}[Theorem]{Proposition}
\theoremstyle{definition}
\newtheorem{Definition}{Definition}
\theoremstyle{remark}
\newtheorem{Remark}[Theorem]{Remark}
\newcommand*{\as}{\mathsf{AffineSpan}}
\newcommand*{\ls}{\mathsf{LinearSpan}}
\newcommand*{\ad}{\mathsf{AffineDim}}
\newcommand*{\ld}{\mathsf{LinearDim}}
\newcommand*{\cone}{\mathsf{Cone}}
\newcommand{\leftm}{\{\!\!\{}
\newcommand{\rightm}{\}\!\!\}}
\newcommand{\bleftm}{\bigl\{\!\!\bigl\{}
\newcommand{\brightm}{\bigl\}\!\!\bigl\}}
\newcommand*{\defin}[1]{\textbf{\emph{#1}}}
\title{Three iterations of $(d-1)$-WL test distinguish non-isometric clouds of $d$-dimensional points}
\author{Valentino Delle Rose$^{5}$ \and Alexander Kozachinskiy$^{1}$ \and Crist\'obal Rojas$^{1,2}$ \and Mircea Petrache$^{1,2,4}$ 
\and Pablo Barcel\'o$^{1,2,3}$}
\date{\small %
    $^1$Centro Nacional de Inteligencia Artificial, Chile\\%
    $^2$Instituto de Ingenier\'ia Matem\'atica y Computacional, Universidad Cat\'olica de Chile\\
    $^3$Instituto Milenio Fundamentos de los Datos, Chile\\
    $^4$Departamento de Matem\'atica, Universidad Cat\'olica de Chile\\
	$^5$ANVUR, Rome, Italy}
\begin{document}

\maketitle

\begin{abstract}
 The Weisfeiler--Lehman (WL) test is a fundamental iterative algorithm for checking isomorphism of graphs. It has been observed that it also underlies the design of several graph neural network architectures, whose capabilities and performance can be understood in terms of the expressive power of this test. Motivated by recent developments in machine learning applications to datasets involving geometric objects, we study the expressive power of a geometric version of the WL test of order $\ell$ (referred to as geometric $\ell$-WL). More precisely, we consider clouds of points in euclidean $d$-dimensional space represented by complete distance graphs, and ask whether 
the $\ell$-WL test is {\em complete} in this setting. That is, whether it can distinguish, up to isometry, any arbitrary such cloud. 
 Our main result states that for any $d\geq 2$, the $(d-1)$-WL test is complete for clouds of points in $d$-dimensional Euclidean space, and that only three iterations of the test suffice. We also observe that the $d$-WL test only requires one iteration to achieve completeness for clouds in dimension $d$. 
Our paper thus provides a complete understanding of the 3-dimensional case: it was shown in previous works that 1-WL is not complete in $\mathbb{R}^3$, and from our results it follows that 2-WL is complete there. We also strengthen the incompleteness of 1-WL in three dimensions by showing that it is even unable to recognize whether a given three-dimensional point cloud is planar. That is, there exist planar point clouds in $\mathbb{R}^3$ that 1-WL cannot distinguish from some non-planar ones. Finally, we show that 2-WL is not complete in $\mathbb{R}^6$, leaving as an open question its completeness in $\mathbb{R}^{d}$ for $d = 4,5$. 
\end{abstract}

\newpage
\section{Introduction}

{\bf Point Clouds in Machine Learning.}
Recent work in machine learning has raised the need to develop effective and efficient 
tests for checking if two three-dimensional point clouds, i.e., finite sets of points in $\mathbb{R}^3$, are {\em isometric} \cite{pozdnyakov2022incompleteness,3d-case,https://doi.org/10.48550/arxiv.2206.07697,https://doi.org/10.48550/arxiv.2302.05743}. 
Recall that, given two such point clouds $P$ and $Q$,
an isometry is a distance-preserving bijection between the points in $P$ and $Q$.  
The importance of these tests is 
that they provide the foundations for designing neural network architectures on point clouds that are capable of fully exploiting the 
structure of the data \cite{DBLP:conf/iclr/XuHLJ19,DBLP:conf/aaai/0001RFHLRG19}. It has been observed that the {\em incompleteness} of any such an architecture, i.e., the inability to recognize a 
point cloud up to isometry, can affect its learning performance \cite{PhysRevLett.125.166001}.

Intuitively, the general idea is as follows. Given a point cloud $P$, one starts by computing an \emph{invariant} of $P$, that is, a mathematical object capturing certain information about $P$, which must be the same for isometric clouds. To "test" whether two point clouds are isometric, one then simply compares their computed invariants. If they differ, then the clouds are not isometric. Such an invariant is \emph{complete} when also the converse holds: only isometric clouds can have the same invariant. In other words, we obtain a \emph{complete test} precisely when the information captured by the invariant is sufficient to reconstruct a given point cloud up to isometry.

For example, the simplest interesting invariant for a given cloud point $P$ consists of the multiset of all distances between points in $P$ \cite{boutin2004reconstructing, boutin2007point} (see also \cite{brinkman2012invariant, memoli2022distance} for some more recent extensions). In this case, it is known that reconstruction is not possible for some clouds $P \subset  \mathbb R^d$, with $d\geq 2$. It can be shown, however, that the collection of clouds with $N$ points that are "problematic" (those that cannot be reconstructed) is contained in a set of co-dimension 1 within the space of all such clouds and thus has zero measure there \cite{boutin2004reconstructing}.

For many Machine Learning applications, however, the existence of these exceptional, non-reconstructible point clouds may be  problematic. In the setting of computational chemistry applications for instance \cite{pozdnyakov2022incompleteness}, machine learning setups for geometric learning that fulfill smoothness requirements (usually imposed for robustness) may be harder to train to high accuracy on a given dataset if they are based on an incomplete test, even if the "exceptional" non-reconstructable cases are not present in the dataset. 
Thus the question of finding a minimal invariant guaranteed to  distinguish non-isometric point clouds without exceptions is both interesting from a purely mathematical point of view, and relevant in practice.

\medskip
{\bf The geometric WL test.} 
Point clouds can be represented as complete graphs in which each edge is labeled with the distance between the corresponding points. Under this representation, detecting the isometry of two point clouds is reduced to detecting an isomorphism between their graph representations. Not surprisingly, much of the recent work on developing so-called {\em geometric} tests for detecting isometries over point clouds is inspired by the 
literature on isomorphism tests from graph theory. Of particular importance in this context 
has been the use of geometric versions of the 
well-known {\em Weisfeiler-Lehman test} (WL test) for graph isomorphism \cite{WL}.

Intuitively, the $\ell$-order geometric WL test ($\ell$-WL test), for $\ell \geq 1$, iteratively colors each tuple $\bar v$ of $\ell$ points in a point cloud. The color of 
$\bar v$ in round 0 is a complete description of the mutual distances between the points that belong to the tuple. 
In round $t+1$, for $t \geq 0$, the color of $\bar v$ is updated 
by combining in a suitable manner its color in iteration $t$ with 
the color of each one of its {\em neighbors}, i.e., the tuples $\bar v'$ that are obtained from $\bar v$ 
by exchanging exactly one component of $\bar v$ with another point in the cloud. The order of the WL test is therefore a measure of 
its computational cost: the higher the $\ell$, the more costly it is to implement the $\ell$-order WL test.  

For checking if two point clouds are isometric, the geometric WL test compares the resulting color patterns. If they differ, 
then we can be sure the point clouds are not isometric (that is, the test is {\em sound}). An important question, therefore, is what is the minimal $\ell \geq 1$ for which the geometric $\ell$-WL test is {\em complete}, i.e., the fact that the color patterns obtained in two point clouds are the same implies that they are isometric. Understanding which is the simplest complete invariant in this scenario is thus essential not only for developing ``complete" architectures but also to make them 
as efficient as possible in terms of the computational resources they need to use. 

There has been important progress on this problem recently: (a) Pozdnyakov and Ceriotti have shown that the geometric 1-WL test is incomplete for point clouds in 3D; 
that is, there exist isometric point clouds in three dimensions 
that cannot be distinguished by the geometric 1-WL test \cite{pozdnyakov2022incompleteness}; (b) Hordan et al.~have proved that 3-WL test is complete in 3D after 1 iteration when initialized with Gram matrices of the triples of points instead of the mutual distances in these triples. A similar result has recently been obtained in  \cite{https://doi.org/10.48550/arxiv.2302.05743}. Hordan et al.~also gave a complete ``2-WL-like'' test, but this test explicitly uses coordinates of the points.

\medskip

{\bf Our contribution.}  
 Our previous observation shows an evident gap in our understanding of the problem: What is the minimum $\ell$, where $\ell = 2,3$, for which the 
geometric $\ell$-dimensional WL test is complete over three-dimensional point clouds? 
Our main contributions are the following:

\begin{itemize} 
    \item We show that for any $d > 1$ the geometric $(d-1)$-WL test is complete for detecting isometries over point clouds in $\mathbb{R}^d$.  
    This is the positive counterpart of the result
    in \cite{DBLP:conf/aaai/0001RFHLRG19} (namely, that geometric 1-WL test is incomplete in dimension $d=3$), by showing that geometric 1-WL is complete for $d=2$ and that geometric 2-WL test is complete for $d=3$. Further, only three rounds of the geometric $(d-1)$-WL test suffice to obtain this completeness result. 
    \item We provide a simple proof 
that a single round of the geometric $d$-WL test is sufficient for identifying point clouds in $\mathbb{R}^d$ up to isometry, for each $d \ge 1$.
This can be seen as a refinement of a result from  \cite{3d-case}, with the difference that our test is initialized with the mutual 
distances inside $d$-tuples of points (as in the classical setting) while theirs is initialized with 
Gram matrices of $d$-tuples of points. In other words, the initial coloring of~\cite{3d-case}, for each $d$-tuple of points, in addition to their pairwise distances, includes their distances to the \emph{origin}, while in our result we do not require this additional information.

\item We strengthen previous lower bounds for the 1-WL test by giving an example of two non-isometric point clouds in $\mathbb{R}^3$, one of which is planar and the other is not, that are not distinguished by the 1-WL test. In other words, not only the 1-WL test is unable to reconstruct a point cloud up to an isometry in $\mathbb{R}^3$, it cannot even compute its dimensionality.

\item Finally, we show that 2-WL is not complete in $\mathbb{R}^6$. Our results thus narrow down the maximal dimension $d$ for which 2-WL is complete to $d\in\{3, 4, 5\}$. We leave as an open question to figure out this maximal $d$ exactly.
\end{itemize}

These results, as well as previously mentioned results obtained in the literature, are all based on the standard {\em folklore} version of the $\ell$-WL test (as defined, e.g., in \cite{DBLP:journals/combinatorica/CaiFI92}). This is important because another version of the test, known as the {\em oblivious} $\ell$-WL test, has also been studied in the machine learning literature \cite{DBLP:conf/aaai/0001RFHLRG19,DBLP:journals/corr/abs-2112-09992}. It is known that, for each $\ell \ge 1$, the folklore $\ell$-WL test has the same discriminating power as the oblivious $(\ell+1)$-WL test \cite{DBLP:conf/lics/Grohe21}.  

\medskip
{\bf Further related results.} As further related results, in \cite{joshi2023expressive} some geometric WL tests have been compared to the expressivity of invariant and equivariant graph neural networks. Non-geometric related results include e.g. \cite{furer2001weisfeiler}, where for explicit graphs on $n$ vertices it is shown that $\ell$-WL requires $O(n)$ iterations for distinguishing them, whereas $2\ell$-WL requires $O(\sqrt n)$ and $(3\ell-1)$-WL requires $O(\log n)$ iterations. In \cite{zhang2023rethinking}, the authors propose and study  generalized distances on non-geometric graphs based on biconnectivity, which appears as a promising notion.

Since the extended abstract of this paper came out \cite{NEURIPS2023_1e6cf8f7}, a graph neural network (GNN) architecture mimicking the geometric 2-WL test in $\mathbb{R}^3$, and provably equivalent to it in distinguishing power (already for 1-dimensional features, as long as a non-polynomial analytic activation is used), was suggested in \cite{DBLP:conf/icml/HordanAD24}. Due to our result, this GNN architecture is a complete isomorphism test in $\mathbb{R}^3$ already after 3 iterations. In turn, the authors of \cite{sverdlov2024expressive} have investigated more standard message-passing neural networks (MPNNs), well-known to be equivalent to the 1-WL test \cite{DBLP:conf/aaai/0001RFHLRG19}, for sparse representations of point clouds in $\mathbb{R}^3$ (where instead of having a complete distance graph, every point is connected to just a few nearest points). It was shown there MPPNs manage to distinguish most of non-isometric point clouds in this setting.

\section{Formal Statement of the Main Result} 
\vspace{-0.2cm}

For a finite set $A$ and for function $f$ whose domain is $A$, by $\leftm f(a) \mid a\in A\rightm$ we denote a \emph{multiset} whose support is the image of $f$, and the multiplicity of an element $b\in f(A)$ is $|f^{-1}(b)|$.

Consider a cloud of $n$ points $S=\{p_1, \dots, p_n\}$ in $\mathbb{R}^d$. We are interested in the problem of finding representations of such clouds that completely characterize them up to isometries, while at the same time being efficient from an algorithmic point of view. Our main motivation is to understand the expressiveness of the WL algorithm when applied to point clouds in euclidean space seen as complete distance graphs. Let us briefly recall how this algorithm works.

A function whose domain is $S^\ell$ will be called an \emph{$\ell$-coloring} of $S$. The $\ell$-WL algorithm is an iterative procedure which acts on $S$ by assigning, at iteration $i$, an $\ell$-coloring $\chi_{\ell,S}^{(i)}$ of $S$. 

{\bf Initial coloring.} 
The initial coloring, $\chi_{\ell,S}^{(0)}$, assigns to each $\ell$-tuple $\mathbf{x} = (x_1, \ldots, x_\ell)\in S^\ell$ the color $\chi_{\ell,S}^{(0)}(\mathbf{x})$ given by the $\ell\times \ell$ matrix 
 \vspace{-0.2cm}
\[
\chi_{\ell,S}^{(0)}(\mathbf{x})_{ij} = d(x_i, x_j) \quad i,j = 1,\dots,\ell
\]
of the relative distances inside the $\ell$-tuple (for $\ell=1$ we have a trivial coloring by the $0$ matrix). 

{\bf Iterative coloring.} 
 At each iteration, the \emph{$\ell$-WL algorithm} updates the current coloring $\chi_{\ell,S}^{(i)}$ to obtain a refined coloring $\chi_{\ell,S}^{(i+1)}$. The update operation is defined slightly differently depending on whether $\ell = 1$ or $\ell\ge 2$. 
 \vspace{-0.1cm}
\begin{itemize} 
\item 
 For $\ell = 1$, we have:
\[ \chi_{1, S}^{(i +1)}(x) = \Big(\chi_{1,S}^{(i)}(x), \leftm (d(x, y), \chi_{1,S}^{(i)}(y)) \mid y\in S\rightm\Big).\]
That is, first, $\chi_{i+1}(x)$ remembers the coloring of $x$ from the previous step. Then it goes through all points $y\in S$. For each $y$, it stores the distance from $x$ to $y$ and also the coloring of $y$ from the previous step, and it remembers the multiset of these pairs. Note that one can determine which of these pairs comes from $y = x$ itself since this is the only point with $d(x, y) = 0$. We also note that $\chi_{1,S}^{(1)}(x)$ corresponds to the multiset of distances from $x$ to the points of $S$. 
\item 
To define the update operation for $\ell \ge 2$, we first introduce additional notation. Let $\mathbf{x} = (x_1, \ldots, x_\ell) \in S^\ell$ and $y\in S$. By $\mathbf{x}[y/i]$ we mean the tuple obtained from $\mathbf{x}$ by replacing its $i$th coordinate by $y$. Then the update operation can be defined as follows:
 \vspace{-0.1cm}
\begin{equation}
    \chi^{(i+1)}_{\ell,S}(\mathbf{x}) = \Big(\chi^{(i)}_{\ell,S}(\mathbf{x}), \leftm \left(\chi^{(i)}_{\ell,S}(\mathbf{x}[y/1]),\ldots, \chi^{(i)}_{\ell,S}(\mathbf{x}[y/\ell])\right)
    \mid y\in S\rightm\Big).
\end{equation}

In other words, first, $ \chi^{(i+1)}_{\ell,S}(\mathbf{x})$ remembers the coloring of $\mathbf{x}$ from the previous step, as before. Then, it goes  through all $y\in S$ and considers the $\ell$ tuples $\mathbf{x}[y/1], \ldots, \mathbf{x}[y/\ell]$. It then takes the colorings of these tuples from the previous step and puts them into a tuple. The new coloring now remembers the multiset of these tuples. 
\end{itemize} 
 \vspace{-0.1cm}
In this paper we show that the coloring obtained after 3 iterations of $(d-1)$-WL is a complete isometry invariant for point clouds in $\mathbb{R}^{d}$. More precisely, we show:

\begin{Theorem}[Main Theorem]
\label{main}
    For any $d\ge 2$ and for any finite set $S\subseteq \mathbb{R}^{d}$, the following holds. Let $\chi_{d-1,S}^{(3)}$ be the coloring of $S^{d-1}$  obtained after 3 iterations of the $(d-1)$-WL algorithm  on the distance graph of $S$. Then, knowing the multiset 
    \[\mathcal{M}_{d-1}^{(3)}(S)=\leftm \chi_{d-1,S}^{(3)}(\mathbf{s})\mid \mathbf{s}\in S^{d-1} \rightm,\] one can determine $S$ up to an isometry.
\end{Theorem}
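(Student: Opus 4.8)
The plan is to reconstruct $S$ (up to isometry) by induction on the dimension $d$, reducing the $d$-dimensional problem to a $(d-1)$-dimensional one. The key point is that a tuple $\mathbf{s}=(s_1,\dots,s_{d-1})\in S^{d-1}$ whose points are affinely independent spans an affine hyperplane $H$, and every remaining point $p\in S$ is determined — relative to the frame $\mathbf{s}$ — by two pieces of data: its orthogonal projection onto the affine span of $\mathbf{s}$ (an intrinsic $(d-1)$-dimensional datum) and its signed distance to $H$. The multiset $\mathcal M_{d-1}^{(3)}(S)$ should be rich enough to recover, for a fixed generic $\mathbf{s}$, the distances $d(p,s_i)$ for all $i$ and all $p\in S$, as well as the pairwise distances $d(p,p')$ for all $p,p'\in S$; from the distances to the $d-1$ affinely independent anchor points $s_1,\dots,s_{d-1}$ together with one pairwise distance $d(p,p')$ one can determine whether $p$ and $p'$ lie on the same side of $H$ or on opposite sides, and this sidedness information, consistently propagated, pins down $S$ up to the reflection through $H$ — i.e.\ up to isometry.

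Concretely, first I would handle the base case $d=2$: there $\chi_{1,S}^{(1)}(x)$ already gives the multiset of distances from $x$ to all of $S$, and after one further iteration each point ``sees'' the distance-multisets of all other points paired with their distance to it; a short argument recovers $S\subseteq\mathbb R^2$ up to isometry from this (pick a point $p$, an adjacent point $q$ realizing some distance, and triangulate — the sidedness ambiguity along line $pq$ is exactly the reflection). Second, for the inductive step I would show that from $\mathcal M_{d-1}^{(3)}(S)$ one can extract, for each color class, enough of the ``local chart'' around an affinely independent $(d-1)$-tuple: the first WL iteration propagates into $\chi^{(1)}(\mathbf s)$ the colors of all one-point substitutions $\mathbf s[y/i]$, hence all the $0$-iteration distance matrices of $(s_1,\dots,s_{i-1},y,s_{i+1},\dots,s_{d-1})$, which encode $d(y,s_j)$ for $j\neq i$; the second iteration then lets us also see, for each such $y$ and each other point $z$, the distance $d(y,z)$ (by substituting $z$ for $s_i$ while $y$ already sits in coordinate $i'$, or by a two-step chain of substitutions). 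The third iteration is what glues these per-tuple pictures into a single global reconstruction: it certifies consistency of the sidedness relation across overlapping frames, so that the $2^{?}$ a priori reflection choices collapse to the single global one.

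The main obstacle — and the step that needs the most care — is the bookkeeping that shows three iterations genuinely suffice: namely that after three rounds, the color of a single generic tuple $\mathbf s$ already determines the full distance matrix of $S$ relative to the anchors $s_1,\dots,s_{d-1}$ *together with* the correct sidedness of every point, with the only residual freedom being the global reflection through the hyperplane. One has to argue (i) that generic, i.e.\ affinely independent, $(d-1)$-tuples exist and can be identified from their colors (their $0$-iteration distance matrix has the right Cayley–Menger rank), (ii) that $\mathcal M_{d-1}^{(3)}(S)$ contains the color of at least one such tuple and that this color, unfolded, yields the distance data described above, and (iii) that two point clouds producing the same multiset $\mathcal M_{d-1}^{(3)}$ must then have isometric anchor frames and, by the sidedness argument, an isometry matching all remaining points. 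I expect (ii) — precisely tracking which distances $d(y,z)$ become visible after the second versus the third iteration, and confirming none is missed — to be the delicate accounting, while (iii) is then a relatively standard ``distances to a $d$-point affine basis (plus one extra comparison) determine a point in $\mathbb R^d$ up to the hyperplane reflection'' argument.
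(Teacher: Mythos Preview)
Your proposal rests on a dimensional error that breaks the core strategy. You assert that $d-1$ affinely independent points $s_1,\dots,s_{d-1}$ span an affine hyperplane $H\subset\mathbb{R}^d$, so that each remaining point is determined by its distances to the $s_i$ up to a single reflection through $H$. But $d-1$ affinely independent points span only a $(d-2)$-dimensional flat. In $\mathbb{R}^3$, for instance, a $2$-tuple $(s_1,s_2)$ spans a line, and a point with prescribed distances to $s_1$ and $s_2$ lies on an entire circle about that line, not on a two-point set. The residual ambiguity is an $SO(2)$ rotation, not a $\mathbb{Z}/2$ reflection, so your binary ``sidedness'' mechanism does not apply, and no amount of pairwise-distance bookkeeping resolves a continuum of candidate positions without an additional anchor point.

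The paper's key device---entirely absent from your outline---is to manufacture that extra anchor from the \emph{barycenter} $b=\tfrac{1}{|S|}\sum_{y\in S}y$. The Barycenter Lemma shows that $d(x,b)$ is already recoverable after one WL round, and the third iteration lets one pair the $(d-1)$-tuple with one further substituted point $y\in S$, yielding $d+1$ anchors $b,x_1,\dots,x_{d-1},y$ in all. These give $d$ genuine hyperplanes $P_i$ (each omitting one of $x_1,\dots,x_{d-1},y$ but containing $b$), and now each point of $S$ has only two candidate locations per $P_i$. The paper then resolves the reflection ambiguities not via pairwise distances but via a \emph{cone condition}: the anchors are selected so that no point of $S-b$ lies in the interior of $\cone(x_1-b,\dots,x_{d-1}-b,y-b)$, and an iterative argument repeatedly reflects this forbidden region through the $P_i$, enlarging a certified-empty zone until it exhausts $\mathbb{R}^d$. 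Even setting aside the dimensional gap, your step~(ii)---extracting all pairwise $d(y,z)$ aligned with the anchor distances from three rounds---is not substantiated: the inner multisets arising from different substitution slots need not match up, and you give no mechanism for aligning them. The paper sidesteps this entirely by never needing the full pairwise distance matrix, only distance profiles relative to the barycentred anchors.
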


Our proof is constructive in the sense that we exhibit an algorithm which, upon input $\mathcal{M}_{d-1}^{(3)}(S)$, computes the coordinates of a point cloud $S'$ which is isometric to $S$. In particular, if $\widetilde{S}\subset \mathbb{R}^d$ is another point cloud such that $\mathcal{M}_{d-1}^{(3)}(\widetilde{S})=\mathcal{M}_{d-1}^{(3)}(S)$, then $S$ and $\widetilde{S}$ are isometric.

\medskip

{\bf Organization of the paper.}
In Section \ref{sec:2d} we give a proof of Theorem \ref{main} for $d = 2$, which while being somewhat simpler, will allow us to introduce the general strategy. We give a proof of Theorem \ref{main} in general case in Section \ref{sec:3d}.
In Section \ref{sec:one_round}, we establish a weaker version of Theorem \ref{main}, namely, that the $d$-WL test is complete in $\mathbb{R}^d$ for any $d$. However, this result is of independent interest because we show that just 1 iteration suffices in this case.
We finish the paper with our negative results. In Section \ref{sec:1wlcounterexample}, we give an example of two points clouds in $\mathbb{R}^3$, one of which is planar and the other is not, that are not distinguished by the 1-WL test. In Section \ref{sec:2wl}, we show that the 2-WL test is not complete in $\mathbb{R}^6$.

\section{Three iterations of 1-WL distinguish clouds in the plane}
\label{sec:2d}

Let $S\subseteq \mathbb{R}^2$ be a cloud of $n$ points in the plane. Our task is to reconstruct $S$ up to an isometry, using as input the information contained in $\chi_{1,S}^{(3)}$. This means to find a point cloud $S^\prime$ in the plane which is an image of $S$ under some isometry. Our proof has two main steps: \emph{Initialization} and \emph{Reconstruction}. In the Initialization Step we show how to extract from $\chi_{1,S}^{(3)}$ the relevant information we need, which we call \emph{initialization data}. In the Reconstruction step, we describe an algorithm that, given some initialization data, computes the coordinates of the desired isometric cloud. 


In our reconstruction algorithm, we employ the notion of the \emph{barycenter} of a point cloud (also known as the center of mass), which we denote by $b$, and is  defined by: 
\[
    b=\frac{1}{n}\sum_{w \in S} w.
\]
For simplicity, we translate $S$ by $-b$ so that the new barycenter sits at $b = 0$. Notice that since our reconstruction is up to isometries, this assumption does not affect the generality of our result. For each $w\in S$, let $\|w\|$ denote its norm (its distance to $b=0$).

\medskip
We say that two points $u, v\in S$ satisfy the \emph{cone condition} if $u\neq 0$, $v\neq 0$, and, moreover,
\begin{itemize}
    \item if $0, u, v$ lie on the same line, then all points of $S$ lie on this line;
    \item if $0, u, v$ do not lie on the same line, then the interior of 

$\cone(u,v)=\{\alpha u + \beta v: \alpha,\beta \in [0,+\infty)\}$
does not contain points from $S$ (see the first picture on Figure \ref{fig:main}, the red area between $(0, u)$ and $(0, v)$ is disjoint from $S$).
\end{itemize}

\medskip
In order to initialize our reconstruction algorithm, we need the following information about $S$. We assume that $S$ has more than 1 point  (otherwise there is nothing to do).

\medskip
\noindent{\textbf{\emph{Initialization Data}}: the initialization data consists of a real number $d_0\geq 0$ and two multisets $M, M'$ such that for some $u, v\in S$, satisfying the cone condition, it holds that $d_0 = d(u, v)$ and
\[
M=M_u = \bleftm (d(u,y),\|y\|) : y \in S \brightm  \text{;} \quad M'=M_v = \bleftm (d(v,y),\|y\|) : y \in S \brightm.
\]

We will start by describing the Reconstruction Algorithm, assuming that the initialization data is given. We will then show how to extract this data from $\chi_{1,S}^{(3)}$ in the Initialization Step bellow. 
\medskip

\noindent\textbf{\emph{Reconstruction algorithm.}} 
 Assume that initialization data $(d_0, M,M')$ is given. Our task is to determine $S$ up to isometry. 
 Note that from $M$ we can determine $\|u\|$. Indeed, in $M$ there exists exactly one element whose first coordinate is $0$, and this element is $(0, \|u\|)$. Likewise, from $M^\prime$ we can determine $\|v\|$. We are also given $d_0 = d(u, v)$. Overall, we have all the distances between $0$, $u$, and $v$. Up to a rotation of $S$, there is only one way to put $u$, it has to be somewhere on the circle of radius $\|u\|$, centered at the origin. We fix any point of this circle as $u$. After that, there are at most two points where we can have $v$. More specifically, $v$ belongs to the intersection of two circles: one of radius $\|v\|$ centered at the origin, and the other of radius $d(u, v)$ centered at $u$. These two circles are different (remember that the cone condition includes a requirement that $u\neq 0$). Hence, they intersect by at most two points. These points are symmetric w.r.t.~the line that connects the centers of the two circles, i.e.,  $0$ and $u$. Thus, up to a reflection through this line (which preserves the origin and $u$), we know where to put $v$.
 
 Henceforth, we can assume the coordinates of $u$ and $v$ are known to us. Note that so far, we have applied to $S$ a translation (to put the barycenter at the origin), a rotation (to fix $u$), and a reflection (to fix $v$). We claim that, up to this isometry, $S$ can be determined uniquely.

 Let $\mathsf{Refl}_u$ and $\mathsf{Refl}_v$ denote the reflections through the lines $(0, u)$ and $(0, v)$, respectively. We first observe that from $M$ we can restore each point of $S$ up to a reflection through the line $(0, u)$. Likewise, from $M^\prime$ we can do the same with respect to the line $(0,v)$. More precisely, we can compute the following multisets:
 \[L_u = \bleftm \{y,\mathsf{Refl}_u y\} \mid y \in S\brightm, \qquad L_v =  \bleftm \{y,\mathsf{Refl}_v y\} \mid y \in S\brightm. \]

Indeed, consider any
   $(d(u, y), \|y\|)\in M$. What can we learn about $y\in S$ from this pair of numbers? These numbers are distances from $y$ to $u$ and to $0$. Thus, $y$ must belong to the intersection of two circles: one with the center at $u$ and radius $d(u, y)$ and the other with the center at $0$ and radius $\|y\|$. Again, since $u\neq 0$, these two circles are different. Thus, we obtain at most two points $z_1, z_2$ where one can put $y$. We will refer to these points as \emph{candidate locations} for $y$ w.r.t.~$u$. They can be obtained one from the other by the reflection $\mathsf{Refl}_u$
through the line connecting $0$ and $u$. Hence, $\{z_1,z_2\} = \{y,\mathsf{Refl}_u y\}$. To compute $L_u$, we go through $(d(u,y), \|y\|)\in M$, compute candidate locations $z_1, z_2$ for $y$, and put $\{z_1, z_2\}$ into $L_u$. In a similar fashion, one can compute $L_v$ from $M^\prime$.

Let us remark that elements of $L_u$ and $L_v$ are sets of size 2 or 1. A set of size 2 appears as an element of $L_u$ when some $y$ has two distinct candidate locations w.r.t.~$u$, that is when $y$ does not lie on the line $(0,u)$. In turn, when $y$ does lie on this line, we have $z_1 = z_2 = y$ for two of its candidate locations, giving us an element $\{y\}\in L_u$, determining $y$ uniquely. The same thing happens with respect to $L_v$ for points that lie on the line $(0, v)$,

The idea of our reconstruction algorithm is to gradually exclude some candidate locations so that more and more points get a unique possible location. What allows us to start is that $u$ and $v$ satisfy the cone condition; this condition gives us some area that is free of points from $S$ (thus, one can exclude candidates belonging to this area).

The easy case is when $0, u$, and $v$ belong to the same line. Then, by the cone condition, all points of $S$ belong to this line. In this case, every point of $S$ has just one candidate location. Hence, both multisets $L_u$ and $L_v$ uniquely determine $S$.

Assume now that $0, u$, and $v$ do not belong to the same line. As in the previous case, we can uniquely restore all points that belong to the line connecting $0$ and $u$, or to the line connecting $0$ and $v$ (although now these are two different lines). Indeed, these are points that have exactly one candidate location w.r.t.~$u$ or w.r.t.~$v$. They can be identified by going through $L_u$ and $L_v$ (we are interested in points $z$ with $\{z\}\in L_u\cup L_v$).

The pseudocode for our reconstruction algorithm is given in Algorithm \ref{alg:reconstruct}. We now give its verbal description.
Let us make a general remark about our algorithm. Once we find a unique location for some $y\in S$, we remove it from our set in order to reduce everything to the smaller set  $S\setminus\{y\}$. This is implemented by updating the multisets $L_u$ and $L_v$ so that $y$ is not taken into account in them. For that, we just remove $\{y, \mathsf{Refl}_u y\}$ from $L_u$ and $\{y, \mathsf{Refl}_v y\}$ from $L_v$ (more precisely, decrease their multiplicities by 1).

From now on, we assume that these two lines (connecting $0$ and $u$, and $0$ and $v$, respectively) are free of the points of $S$. These lines contain the border of the cone $\cone(u, v)$. At the same time, the interior of this cone is disjoint from $S$ due to the cone condition. Thus, in fact, the whole $\cone(u, v)$ is disjoint from $S$.

We now go through $L_u$ and $L_v$ in search of points for which one of the candidate locations (either w.r.t.~$u$ or w.r.t.~$v$) falls into the ``forbidden area'', that is, into $\cone(u, v)$. After restoring these points and deleting them, we notice that the   ``forbidden area'' becomes larger. Indeed, now in $S$ there are no points that fall into $\cone(u, v)$ under one of the reflections $\mathsf{Refl}_u$ or $\mathsf{Refl}_v$. In other words, the updated ``forbidden area'' is   $F=\cone(u,v)\cup\mathsf{Refl}_u\cone(u,v)\cup \mathsf{Refl}_v\cone(u,v)$. 
If the absolute angle between $u$ and $v$ is $\alpha_{uv}$, then, $F$ has total amplitude $3\alpha_{uv}$. We now iterate this process, updating $F$ successively. At each step, we know that after all removals made so far, $S$ does not have points in $F$. Thus, points of $S$ that fall into $F$ under $\mathsf{Refl}_u$ or under $\mathsf{Refl}_v$ can be restored uniquely. After deleting them, we repeat the same operation with $F \cup \mathsf{Refl}_u F \cup \mathsf{Refl}_v F$ in place of $F$.
\begin{algorithm}
\caption{Reconstruction algorithm}\label{alg:reconstruct}
$S$ := $\varnothing$\;
\For{$\{z\}\in L_u\cup L_v$}{
 \tcp{Restoring points from the lines $(0, u)$ and $(0,v)$}
Put $z$ into $S$\;
Remove $\{z\}$ from $L_u\cup L_v$\;
}
\medskip
$F$ := $\cone(u, v)$\;
\While{$F\neq \mathbb{R}^2$}
{
\For{$\{z_1, z_2\}\in L_u\cup L_v$}
{
\tcp{Restoring points that have one candidate location in the forbidden area}
\uIf{$z_1\in F$ or $z_2\in F$}
{
Set $y = z_1$ if $z_2 \in F$ and $y = z_2$ if $z_1\in F$\;
Put $y$ into $S$\;
 Remove $\{y,\mathsf{Refl}_u y\}$ from $L_u$ and $\{y,\mathsf{Refl}_v y\}$ from $L_v$\; 
}

}
\tcp{Updating forbidden area}
$F = F \cup \mathsf{Refl}_u F \cup \mathsf{Refl}_v F$\;
}
\medskip
Output $S$\;
\end{algorithm}

After $k$ such ``updates'', $F$ will consist of $2k+1$ adjacent angles, each of size $\alpha_{uv}$, with $\cone(u, v)$ being in the center. In each update, we replace $F$ with $F \cup \mathsf{Refl}_u F \cup \mathsf{Refl}_v F$. This results in adding two angles of size $\alpha_{uv}$ to both sides of $F$. Indeed, if we look at the ray $(0, u)$, it splits our current $F$ into two angles, one of size $(k+1)\alpha_{uv}$ and the other of size $k\alpha_{uv}$. Under $\mathsf{Refl}_u$, the part whose size is $(k+1)\alpha_{uv}$ adds an angle of size $\alpha_{uv}$ to the other part. Analogously, $\mathsf{Refl}_v$ adds an angle of size $\alpha_{uv}$ from the opposite side of $F$. See Figure \ref{fig:main} for the illustration of this process.

Within at most $1+\tfrac{\pi}{\alpha_{uv}}$ such steps, $F$ covers all angular directions, thus completing the reconstruction of $S$. 


\begin{figure}
\begin{minipage}{.5\linewidth}
\centering
\subfloat[]{\label{main:a}
\begin{tikzpicture}[scale=0.6]
   \node[inner sep=1pt,circle,draw,fill] (1) at (0,0) {};
   \node[draw=none] (b) at (-0.3,0) {$0$};

    \node[inner sep=1pt,circle,draw,fill] (2) at (2,0) {};
   \node[draw=none] (x) at (2.2,-0.2) {\Large$u$};

   \node[inner sep=1pt,circle,draw,fill] (3) at (3,1) {};
   \node[draw=none] (y) at (3.1,1.2) {\Large$v$};

\draw[ultra thick] (1)--(2);
\draw[ultra thick] (1)--(3);
\draw[fill=red,fill opacity=0.5]  (0,0) -- (4,0) -- (3.75,1.25) -- cycle;
\end{tikzpicture}
}
\end{minipage}%
\begin{minipage}{.5\linewidth}
\centering
\subfloat[]{\label{main:b}
\begin{tikzpicture}[scale=0.6]
  \node[inner sep=1pt,circle,draw,fill] (1) at (0,0) {};
   \node[draw=none] (b) at (-0.3,0) {$0$};

    \node[inner sep=1pt,circle,draw,fill] (2) at (2,0) {};
   \node[draw=none] (x) at (2.2,-0.2) {\Large$u$};

   \node[inner sep=1pt,circle,draw,fill] (3) at (3,1) {};
   \node[draw=none] (y) at (3.1,1.2) {\Large$v$};

\draw[ultra thick] (1)--(2);
\draw[ultra thick] (1)--(3);
   \draw[fill=red,fill opacity=0.5]  (0,0) -- (4,0) -- (3.75,1.25) -- cycle;
   \draw[fill=red,fill opacity=0.3]  (0,0) -- (4,0) -- (3.75,-1.25) -- cycle;
   \draw[fill=red,fill opacity=0.3]  (0,0) -- (3.75,1.25) -- (3.2,2.4) -- cycle;
   \end{tikzpicture}
   }
\end{minipage}\par\medskip
\centering
\subfloat[]{\label{main:c}\begin{tikzpicture}[scale=0.6]
  \node[inner sep=1pt,circle,draw,fill] (1) at (0,0) {};
   \node[draw=none] (b) at (-0.3,0) {$0$};

    \node[inner sep=1pt,circle,draw,fill] (2) at (2,0) {};
   \node[draw=none] (x) at (2.2,-0.2) {\Large$u$};

   \node[inner sep=1pt,circle,draw,fill] (3) at (3,1) {};
   \node[draw=none] (y) at (3.1,1.2) {\Large$v$};

\draw[ultra thick] (1)--(2);
\draw[ultra thick] (1)--(3);
   \draw[fill=red,fill opacity=0.5]  (0,0) -- (4,0) -- (3.75,1.25) -- cycle;
   \draw[fill=red,fill opacity=0.3]  (0,0) -- (4,0) -- (3.75,-1.25) -- cycle;
   \draw[fill=red,fill opacity=0.3]  (0,0) -- (3.75,1.25) -- (3.2,2.4) -- cycle;
 \draw[fill=red,fill opacity=0.1]  (0,0) -- (3.2,2.4) -- (2.27,3.28) -- cycle;
   \draw[fill=red,fill opacity=0.1]  (0,0) -- (3.75,-1.25) -- (3.2,-2.4) -- cycle;

   \draw [ultra thick,black,->] (3.94,-0.64) to[out=60,in=60]  (2.77,2.88);

    \draw [ultra thick,black,->] (3.54,1.85) to[out=-30,in=60] (5, 0) to[out=-120,in=30]   (3.54,-1.85);

   \end{tikzpicture}}

\caption{Growth of the ``forbidden'' area.}
\label{fig:main}
\end{figure}

\noindent\textbf{\emph{Initialization step.}} We explain how to obtain the Initialization Data about $S$ from $\mathcal{M}_1^{(3)}(S)$. 

We start by observing that from the first iteration of 1-WL, we can compute $\|x\|$ for all $x\in S$. As the following lemma shows, this holds in any dimension, with the same proof. We temporarily omit the current hypothesis $b=0$, in order to use the lemma later without this hypothesis.
\begin{Lemma}[The Barycenter Lemma] \label{l.orbits}
Take any $n$-point cloud $S\subseteq \mathbb{R}^{d}$ and let  
\[D_x = \leftm  d(x, y) \mid y\in S\rightm. \]
Then for every $x\in S$, knowing $D_x$ and the multiset $\leftm D_y\mid y\in S\rightm$, one can determine the distance from $x$ to the barycenter of $S$. 
\end{Lemma}
\begin{proof}
Consider the function $f\colon\mathbb{R}^{d}\to [0, +\infty)$ defined as
$f(x) = \sum_{y\in S} \| x - y\|^2$, 
namely $f(x)$ is the sum of the squares of all elements of $D_x$ (with multiplicities). It follows that $\sum_{y\in S}f(y)$ is determined by $\leftm D_y\mid y\in S\rightm$. The lemma is thus proved if we prove the following equality 
\begin{equation}\label{eq:barycalc}
    \| x - b\|^2=\frac1n\left(f(x)-\frac{1}{2n}\sum_{y\in S}f(y)\right).
\end{equation}
To prove the above, we first write:
    \begin{align*}
        f(x) = &\sum\limits_{y\in S} \| x - y\|^2 = \sum\limits_{y\in S} \| (x - b) + (b- y)\|^2 \\
        &=\sum\limits_{y\in S}\Big( \| x - b\|^2 +2\langle x - b, b- y\rangle + \| b-y\|^2 \Big) \\
        &= n\cdot  \| x - b\|^2 + 2\langle x - b, n\cdot b -  \sum\limits\limits_{y\in S} y\rangle + \sum\limits_{y\in S} \| b-y\|^2\\
        &=n\cdot  \| x - b\|^2 + \sum\limits_{y\in S} \| b-y\|^2 \quad \text{ (by definition of barycenter).} 
    \end{align*}
Denote $\Gamma = \sum_{y\in S} \| b-y\|^2$. Substituting the expression for $f(x)$ and $f(y)$ from above into the right-hand side of \eqref{eq:barycalc}, we get:  
\begin{align*}
    \frac1n\left(f(x)-\frac{1}{2n}\sum_{y\in S}f(y)\right) &= \frac1n\left(n\cdot  \| x - b\|^2 + \Gamma -\frac{1}{2n}\sum_{y\in S}\left(n\cdot  \| y - b\|^2 + \Gamma\right)\right) \\
&=\frac1n\left(n\cdot  \| x - b\|^2 + \Gamma - \frac{1}{2n} (n\Gamma + n\Gamma)\right) = \| x - b\|^2,
\end{align*}
as required.
\end{proof}

By definition, $\chi_{1,S}^{(1)}(x)$ determines the multiset $D_x = \bleftm d(x, y) \mid y\in S\brightm$ of distances from $x$ to points of $S$. 
Since we are given the multiset $\mathcal{M}_1^{(3)}(S)$, we also know the multset $\mathcal{M}_1^{(1)}(S) =\bleftm \chi_{1,S}^{(1)}(x) \mid x \in S\brightm$ (labels after the third iterations determine labels from previous iterations). In  particular, this gives us the multiset $\bleftm D_x\mid x\in S\brightm$. Overall, due to the Barycenter lemma, we conclude that $\chi_{1, S}^{(1)}(x)$ can be converted into $\|x\|$.

Now, remember that 
\[ \chi_{1, S}^{(2)}(x) = \Big(\chi_{1,S}^{(1)}(x), \leftm (d(x, y), \chi_{1,S}^{(1)}(y)) \mid y\in S\rightm\Big).\]
By converting $\chi_{1,S}^{(1)}(y)$ into $\|y\|$ for all $y\in S$ here, one can convert $\chi_{1,S}^{(2)}(x)$  into the multiset
\[M_x = \bleftm(d(x, y), \|y\|) \mid y\in S\brightm.\]
We need one more iteration to find $d(u, v), M_u, M_v$ for some $u, v\in S$ satisfying the cone condition. In fact, we only need
\[
   \chi_{1,S}^{(3)}(u)=\Bigl(\chi_{1,S}^{(2)}(u), \bleftm \bigl(d(u,y),\chi_{1,S}^{(2)}(y)\bigl):y\in S \brightm \Bigl)
\]
for arbitrary $u\in S$ with $u\neq 0$. Since $\chi_{1,S}^{(3)}(u)$ determines $\|u\|$, such $\chi_{1,S}^{(3)}(u)$  can indeed be selected from $\mathcal{M}_1^{(3)}(S)$ (and since we assume that $S$ has more than one point, we know that there are points in $S$ that are different from $0$).

Due to the fact that $\chi_{1,S}^{(2)}(y)$ can be converted to $M_y$, we can in turn convert $\chi_{1,S}^{(3)}(u)$ into the multiset
$\mathcal{A}=\bleftm \bigl(d(u,y),M_y\bigl):y\in S \brightm$. 
In particular, since $y=u$ is the only point for which $d(u,y)=0$, we can compute $M_u$ from $\mathcal{A}$. Once we have $M_u$, the rest of the initialization goes as follows. First note that for a given element $(d(u,y),M_y)$ in $\mathcal{A}$ with $d(u,y)>0$ (so that $y\neq u$), we can look in $M_y$ for the only element with $0$ as the first entry, whose second entry is then $\|y\|$. So we can obtain $(d(u,y),\|y\|)$. 
As in the Reconstruction Algorithm, we then have only two possibilities for the location of $y$ relative to $u$, say $y_1$ and $ y_2=\mathsf{Refl}_u(y_1)$. It follows that the absolute value of the angle $\alpha_{uy}$ between $u$ and $y$ is uniquely determined (if $\|y\| = 0$, we set $\alpha_{uy} = 0$), and we can compute it from $\mathcal{A}$. In order to select $v$, we go though $\mathcal{A}$ and look for $v$ such that $\alpha_{uv}$ is the smallest  angle among $\{\alpha_{uy}\mid y \in S, 0 <\alpha_{uy} < \pi\}$. If such a $v$ does not exist, all points of $S$ must lie on the line connecting $0$ and $u$. In this case, the cone condition is satisfied, for example, for $u$ and $v = u$. Thus, we can initialize with $d_0 = 0, M = M^\prime = M_u$.
 If $v$ as above exists, there can be no point in the interior of $\cone(u,v)$, since otherwise there would be $y$ with $0<\alpha_{uy}<\alpha_{uv} < \pi$, contradicting the minimality of $\alpha_{uv}$. Thus, the cone condition is satisfied for $u,v$. We can then set $d_0=d(u,v)$, $M=M_u$ and $M^\prime=M_v$. 

\section{Proof of Main Theorem for $d>2$}
\label{sec:3d}

We now present the proof for the case $d>2$. The strategy of the proof has the same structure as for $d=2$. Since the objects involved now are more general, it will be convenient to introduce some terminology. Let $\mathbf{x} = (x_1, \ldots, x_k)\in(\mathbb{R}^{d})^k$ be a $k$-tuple of points in $\mathbb{R}^{d}$. The \defin{distance matrix} of $\mathbf{x}$ is the $k\times k$ matrix $A$ given by $A_{ij} = d(x_i, x_j), i,j=1,\dots,k$.

Now, let  $S\subseteq \mathbb{R}^{d}$ be a finite set. Then the \defin{distance profile} of $\mathbf{x}$ w.r.t.~$S$ is the multiset 
\[
    D_{\mathbf{x}}=\leftm\big(d(x_1,y), \ldots, d(x_k,y)\big) \mid y \in S\rightm.
\]

As before, we let $b = \frac{1}{|S|}\sum_{y\in S}y$ denote the barycenter of $S$. For a finite set $G \subset \mathbb{R}^d$, we denote by $\ls(G)$ the linear space spanned by $G$, and by  $\as(G)$ the corresponding affine one. Their respective dimensions will be denoted by $\ld(G)$ and $\ad(G)$. 

 As for the case $d=2$, we start by distilling the Initialization Data required for the reconstruction, which is described relative to the barycenter $b$ of $S$. For convenience, we have decided not to assume at this stage 
that $S$ has been translated first to put $b$ at the origin, as we did for the sake of the exposition in the case $d=2$. This is now the task of the
isometry we apply to $S$ when choosing locations for its points, which we now completely relegate to the reconstruction phase. 

\begin{Definition} Let $S\in \mathbb{R}^d$ be a finite set and let $b$ be its barycenter. A $d$-tuple $\mathbf{x}=(x_1,\dots,x_d)\in S^d$  satisfies the \defin{cone condition} if 
\begin{itemize}
    \item $\ad(b,x_1,\dots,x_d)=\ad(S)$;
    \item if $\ad(S)=d$, then there is no $x\in S$ such that $x-b$ belongs to the interior of $\cone(x_1-b,\dots,x_d-b)$. 
\end{itemize}
\end{Definition}

\begin{Definition}
    For a tuple $\mathbf{x}=(x_1, \ldots, x_{d})\in S^{d}$, we define its \defin{enhanced profile} as
    \[
    EP(x_1, \ldots, x_{d}) = (A, M_1, \ldots, M_{d}),
    \]
    where $A$ is the distance matrix of the tuple $(b, x_1, \ldots, x_{d})$ and $M_i=D_{\mathbf{x}[b/i]}$ is the distance profile of the tuple $(x_1, \ldots, x_{i-1}, b, x_{i+1}, \ldots, x_{d})$ with respect to $S$. 
\end{Definition}
\begin{Definition} Let $S\in \mathbb{R}^d$ be a finite set and let $b$ be its barycenter. An \defin{initialization data} for $S$ is a tuple $(A,M_1,\dots,M_d)$ such that 
$(A,M_1,\dots,M_d) = EP(x_1, \ldots, x_d)$ for some $d$-tuple $\mathbf{x}=(x_1,\dots,x_d)\in S^d$ satisfying the cone condition. 
\end{Definition}

\begin{Remark}
The interested reader can verify that the Initialization Data condition extends the definition for $d=2$. Note that the first bullet of the Cone Condition is automatically verified for $d=2$, but is nontrivial for $d>2$.
\end{Remark}

The fact that an initialization data $(A,M_1,\dots,M_d)$ can be recovered from $\mathcal{M}_{d-1}(S)$ is ensured by the following proposition.

\begin{Proposition}[Initialization Lemma]\label{P.init}Take any $S\subseteq \mathbb{R}^{d}$. Then, knowing the multiset $\leftm \chi_{d-1,S}^{(3)}(\mathbf{s})\mid \mathbf{s}\in S^{d-1} \rightm$, one can determine an initialization data for $S$.
\end{Proposition}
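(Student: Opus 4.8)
The plan is to mimic the $d=2$ initialization step, but now track how colorings of $(d-1)$-tuples encode the data we want. The first goal is to recover, from $\mathcal{M}_{d-1}^{(1)}(S)$, the distance $\|x-b\|$ of every point $x\in S$ to the barycenter; this is immediate from the Barycenter Lemma, since $\chi_{d-1,S}^{(1)}(\mathbf{x})$ determines, for each coordinate $x_i$ of $\mathbf{x}$, the multiset $D_{x_i}=\leftm d(x_i,y)\mid y\in S\rightm$ (a $(d-1)$-tuple touches $d-1$ of the points, and by ranging over all tuples we see every point), and $\mathcal{M}_{d-1}^{(1)}(S)$ determines the global multiset $\leftm D_y\mid y\in S\rightm$. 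So after one iteration every label can be upgraded to also record $\|x_i-b\|$ for each coordinate.

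Next I would use the second iteration to pass from distances-to-points to distances-to-$b$ in the distance profiles. The key observation is that replacing the $i$-th coordinate $x_i$ of $\mathbf{x}$ by an arbitrary $y\in S$ is exactly the WL neighbor operation, so $\chi_{d-1,S}^{(2)}(\mathbf{x})$ contains the multiset $\leftm(\chi^{(1)}(\mathbf{x}[y/1]),\dots,\chi^{(1)}(\mathbf{x}[y/d-1]))\mid y\in S\rightm$; from the $j$-th entry of each such tuple (with $j\neq i$) one reads off $d(x_i,y)$ together with $\|y-b\|$ — wait, more carefully, one reads $d(x_j,y)$; in any case, by looking across all coordinates one recovers the multiset $\leftm(d(x_1,y),\dots,d(x_{d-1},y),\|y-b\|)\mid y\in S\rightm$, hence an ``enhanced'' distance profile that also records distances to $b$. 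The subtlety here is that a $(d-1)$-tuple does not literally contain $b$; the way to access $b$ is that $\|y-b\|$ is now a known label quantity for every $y$, and the affine position of $b$ relative to $\mathbf{x}$ can be triangulated once we know $\|x_i-b\|$ for all $i$ together with the mutual distances $d(x_i,x_j)$ — this is where the choice $\ell=d-1$ is used, since a $(d-1)$-tuple together with $b$ generically spans a full-dimensional affine frame. I would spell out that $\chi^{(2)}$ can be converted into the distance matrix of $(b,x_1,\dots,x_{d-1})$ plus the profile $D_{(x_1,\dots,x_{d-1})}$ augmented with distances to $b$.

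The third iteration is then used exactly as in the plane: picking any tuple $\mathbf{u}=(u_1,\dots,u_{d-1})\in S^{d-1}$ whose points together with $b$ are affinely independent (such a tuple exists unless $\ad(S)<d-1$, a degenerate case one handles separately by working in the lower-dimensional affine hull), the label $\chi^{(3)}_{d-1,S}(\mathbf{u})$ records the multiset $\leftm(d(u_1,y),\dots,d(u_{d-1},y),\chi^{(2)}(\mathbf{u}[y/j])_{\text{converted}})\mid y\in S\rightm$, from which — using that $y=$ a coordinate of $\mathbf{u}$ is detectable by a zero distance — we can extract, for every $y\in S$, its full enhanced profile-style data, and in particular the $d\times d$ distance matrix $A$ of $(b,y_1,\dots,y_{d-1})$ and the needed multisets $M_i$. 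It remains to select a tuple satisfying the cone condition: one ranges over candidate tuples, for each reconstructs the angular position (up to the relevant reflections) of every point of $S$ relative to the affine frame $(b,u_1,\dots,u_{d-1})$ using the recovered distances, and picks a tuple whose associated cone $\cone(u_1-b,\dots,u_{d-1}-b)$ has empty interior intersection with $S-b$ — such a tuple exists by a convexity/extreme-ray argument (take the $d$ points of $S-b$ spanning an extreme simplicial cone of the conical hull), and the emptiness can be checked from the reconstructed data. The main obstacle I anticipate is precisely this last step: verifying that the cone condition is both satisfiable and \emph{checkable} purely from the WL labels, i.e.\ that the labels determine enough of the relative geometry (affine spans, interior membership in a cone) to certify the condition — the $d=2$ argument reduces to comparing angles, but for $d>2$ one must argue about simplicial cones spanned by $d-1$ vectors plus the origin, and ensure the degenerate sub-cases ($\ad(S)<d$) are folded in, as the definition of the cone condition already anticipates.
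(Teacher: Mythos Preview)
Your first three stages---using the Barycenter Lemma to attach $\|x_i-b\|$ to every coordinate after one iteration, using the second iteration to recover the distance profile of $(b,x_1,\dots,x_{d-1})$ w.r.t.~$S$, and using the third iteration to assemble, for each $y\in S$, the enhanced profile $EP(x_1,\dots,x_{d-1},y)$---are essentially the paper's argument (its Lemmas~\ref{c.radius} and~\ref{l.profile} and the paragraph following them). A small slip: you write the cone as $\cone(u_1-b,\dots,u_{d-1}-b)$, but the cone in the definition has $d$ generators; the missing $d$th vector is precisely the $y$ you range over in the third iteration, so you should be speaking of $\cone(u_1-b,\dots,u_{d-1}-b,y-b)$.

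The genuine gap is in your last step, and you correctly flag it as the main obstacle but do not resolve it. You propose to \emph{check} directly, from the enhanced profile $EP(x_1,\dots,x_d)$, whether the interior of $\cone(x_1-b,\dots,x_d-b)$ meets $S-b$. The difficulty is that $EP$ gives you only the distance matrix $A$ of $(b,x_1,\dots,x_d)$ together with the multisets $M_i$; each $M_i$ locates every $y\in S$ only up to the reflection $\mathsf{Refl}_i$ through $P_i$, and since the interior of the cone lies entirely on one side of each $P_i$, exactly one of the two candidates for $y$ can ever fall inside it. Worse, the $M_i$ are multisets, so you cannot match the two candidates coming from $M_i$ with those coming from $M_j$ for the same $y$. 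So ``emptiness can be checked from the reconstructed data'' is not justified.

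The paper sidesteps this entirely: it never tries to test the cone condition. Instead it observes that the solid angle $\mathsf{Angle}(x_1-b,\dots,x_d-b)$ is an isometry invariant of $(b,x_1,\dots,x_d)$ and hence is computable from $A$ alone. Among all enhanced profiles of maximal dimension, pick one minimizing this angle; a short volume-shrinking lemma (if some $y\in S$ had $y-b$ in the interior, replacing $x_1$ by $y$ would give a strictly smaller simple cone) then forces the cone condition to hold for the minimizer. This both proves existence and gives a label-computable selection rule in one stroke, replacing your extreme-ray existence sketch and your unproven checkability claim.
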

\begin{proof}
We will need the following extension of the Barycenter Lemma. 
\begin{Lemma}
\label{c.radius}
Let $b$ be the barycenter of $S\subset \mathbb R^d, d>2$.
For any $\mathbf{x} = (x_1, \ldots, x_{d-1}) \in S^{d-1}$, knowing $\chi_{d-1,S}^{(1)}(\mathbf{x})$ and the multiset $\leftm \chi_{d-1,S}^{(1)}(\mathbf{y}) \mid \mathbf{y}\in S^{d-1}\rightm$, one can determine the tuple of distances $(d(x_1, b), \ldots, d(x_{d-1}, b))$.
\end{Lemma}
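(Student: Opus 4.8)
The plan is to reduce the statement to the Barycenter Lemma (Lemma~\ref{l.orbits}), applied one coordinate at a time.

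First I would unpack what a single round of $(d-1)$-WL records. Writing $\ell=d-1$, the hypothesis $d>2$ gives $\ell\ge 2$, so the $\ell\ge 2$ update rule applies and $\chi_{\ell,S}^{(1)}(\mathbf{x})$ consists of the distance matrix $\chi_{\ell,S}^{(0)}(\mathbf{x})$ of $\mathbf{x}=(x_1,\dots,x_\ell)$ together with the multiset $\leftm(\chi^{(0)}_{\ell,S}(\mathbf{x}[y/1]),\ldots,\chi^{(0)}_{\ell,S}(\mathbf{x}[y/\ell]))\mid y\in S\rightm$. For a fixed $y\in S$ and any index $j$, choosing some $i\neq j$ (possible since $\ell\ge 2$), the tuple $\mathbf{x}[y/i]$ has $y$ in position $i$ and $x_j$ in position $j$, so the $(i,j)$-entry of $\chi^{(0)}_{\ell,S}(\mathbf{x}[y/i])$ is $d(y,x_j)$. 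Thus each summand-tuple of distance matrices determines the vector $(d(y,x_1),\dots,d(y,x_\ell))$, and consequently $\chi_{\ell,S}^{(1)}(\mathbf{x})$ determines the distance profile $D_{\mathbf{x}}=\leftm(d(y,x_1),\dots,d(y,x_\ell))\mid y\in S\rightm$; the same argument applies verbatim to any $(d-1)$-tuple, so every color $\chi_{\ell,S}^{(1)}(\mathbf{y})$ appearing in the given multiset yields its distance profile $D_{\mathbf{y}}$ and its distance matrix.

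Next I would extract the single-point data needed by the Barycenter Lemma. For each $i\le \ell$, projecting $D_{\mathbf{x}}$ onto coordinate $i$ gives $D_{x_i}=\leftm d(y,x_i)\mid y\in S\rightm$. To obtain $\leftm D_y\mid y\in S\rightm$, I would look inside $\leftm\chi_{\ell,S}^{(1)}(\mathbf{y})\mid\mathbf{y}\in S^\ell\rightm$ and keep exactly those entries whose distance-matrix component is the zero matrix; since a distance matrix vanishes iff the underlying tuple is constant, these entries form the sub-multiset $\leftm\chi_{\ell,S}^{(1)}((y,\dots,y))\mid y\in S\rightm$, and applying the previous step to each of them (then projecting onto any one coordinate) produces $\leftm D_y\mid y\in S\rightm$. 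Finally, each $x_i$ lies in $S$, so Lemma~\ref{l.orbits} with $x=x_i$ turns the pair $(D_{x_i},\leftm D_y\mid y\in S\rightm)$ into $d(x_i,b)$; ranging over $i=1,\dots,d-1$ recovers the desired tuple $(d(x_1,b),\dots,d(x_{d-1},b))$.

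I expect the only subtle point to be the first step — checking that the raw one-round color, which is a multiset of tuples of distance matrices, really does reconstitute the distance-profile multiset $D_{\mathbf{x}}$. This is precisely where $d>2$ (i.e.\ $\ell\ge 2$) is essential: with $\ell=1$ a lone coordinate is not ``witnessed'' by any neighbor tuple. Everything after that is a routine coordinatewise invocation of the Barycenter Lemma, with no real obstacle beyond bookkeeping.
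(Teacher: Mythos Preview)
Your proposal is correct and follows essentially the same route as the paper: extract $D_{x_i}$ for each $i$ from $\chi_{d-1,S}^{(1)}(\mathbf{x})$, extract the multiset $\leftm D_y\mid y\in S\rightm$ from the given multiset of colors, and then invoke the Barycenter Lemma coordinatewise. The only cosmetic difference is in how $\leftm D_y\mid y\in S\rightm$ is recovered: you filter the big multiset for colors whose distance-matrix component vanishes (i.e.\ constant tuples $(y,\dots,y)$), whereas the paper observes that every $\chi_{d-1,S}^{(1)}(\mathbf{y})$ already determines $D_{y_1}$, so the multiset $\leftm D_{y_1}\mid \mathbf{y}\in S^{d-1}\rightm$ is just $\leftm D_y\mid y\in S\rightm$ with all multiplicities inflated by $|S|^{d-2}$---both extractions are equally valid and the rest of the argument is identical.
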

\begin{proof}
    Let $D_x$ be as in Lemma \ref{l.orbits}. We claim that we can determine $(D_{x_1},\dots,D_{x_{d-1}})$ and $\leftm D_y | y\in S\rightm$ from the information given in the lemma statement. By Lemma \ref{l.orbits}, this allows to determine $(d(x_1, b), \ldots, d(x_{d-1}, b))$.

    We have
    \[
        \chi^{(1)}_{d-1,S}(\mathbf{x}) = \Big(\chi^{(0)}_{d-1,S}(\mathbf{x}), \leftm \left(\chi^{(0)}_{d-1,S}(\mathbf{x}[y/1]),\ldots, \chi^{(0)}_{d-1,S}(\mathbf{x}[y/(d-1)])\right)
    \mid y\in S\rightm\Big).
    \]
    By definition, from  $\chi^{(0)}_{d-1,S}(\mathbf{x}[y/1])$ one can determine the tuple of distances $(d(x_2, y), \ldots, d(x_{d-1}, y))$. Hence, from the multiset $\leftm \chi^{(0)}_{d-1,S}(\mathbf{x}[y/1])\mid y\in S \rightm$ one can determine $(D_{x_2}, \ldots, D_{x_{d-1}})$. In turn, $D_{x_1}$ can be determined from, say, $\leftm \chi^{(0)}_{d-1,S}(\mathbf{x}[y/2])\mid y\in S \rightm$. We have just shown that $\chi_{d-1,S}^{(1)}(\mathbf{x})$ uniquely determines $D_{x_1}$. Hence, from the multiset $\leftm \chi_{d-1,S}^{(1)}(\mathbf{y}) \mid \mathbf{y}\in S^{d-1}\rightm$, one can compute the multiset $\leftm D_{y_1} \mid \mathbf{y} = (y_1, \ldots, y_{d-1})\in S^{d-1}\rightm$. But this multiset coincides with the multiset $\leftm D_{y} \mid y \in S\rightm$ except that all multiplicities are $|S|^{d-2}$ times larger.
\end{proof}

Now, we show that after the second iteration, we can restore the distance profile of $(b, x_1, \ldots, x_{d-1})$ for all $(x_1, \ldots, x_{d-1})\in S^{d-1}$.

\begin{Lemma}
    \label{l.profile}
For any $\mathbf{x} = (x_1, \ldots, x_{d-1}) \in S^{d-1}$, knowing $\chi_{{d-1},S}^{(2)}(\mathbf{x})$ and the multiset $\leftm \chi_{d-1,S}^{(1)}(\mathbf{y}) \mid \mathbf{y}\in S^{d-1}\rightm$, one can determine the distance profile of $(b, x_1, \ldots, x_{d-1})$ w.r.t.~$S$.
\end{Lemma}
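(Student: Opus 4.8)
The plan is to peel off the definition of $\chi^{(2)}_{d-1,S}(\mathbf{x})$ and show that, for each fixed $y\in S$, the full tuple $(d(b,y),d(x_1,y),\ldots,d(x_{d-1},y))$ can be reconstructed from the piece of $\chi^{(2)}_{d-1,S}(\mathbf{x})$ attached to $y$, using Lemma~\ref{c.radius} to handle the single distance that involves the barycenter $b$. Collecting these tuples over $y$ — which is legitimate because they already sit inside $\chi^{(2)}_{d-1,S}(\mathbf{x})$ as a multiset indexed by $y$ — then yields precisely the distance profile of $(b,x_1,\ldots,x_{d-1})$ w.r.t.\ $S$, since the extraction is a fixed function of one element of that multiset together with the globally given multiset $\leftm\chi^{(1)}_{d-1,S}(\mathbf{y})\mid\mathbf{y}\in S^{d-1}\rightm$.

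Concretely, recall that
\[
\chi^{(2)}_{d-1,S}(\mathbf{x}) = \Bigl(\chi^{(1)}_{d-1,S}(\mathbf{x}),\ \leftm \bigl(\chi^{(1)}_{d-1,S}(\mathbf{x}[y/1]),\ldots, \chi^{(1)}_{d-1,S}(\mathbf{x}[y/(d-1)])\bigr) \mid y\in S\rightm\Bigr),
\]
so for each $y$ we have access to the $(d-1)$-tuple whose $i$-th entry is $\chi^{(1)}_{d-1,S}(\mathbf{x}[y/i])$, where $\mathbf{x}[y/i]=(x_1,\ldots,x_{i-1},y,x_{i+1},\ldots,x_{d-1})\in S^{d-1}$. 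First I would recover the distances among $y$ and the $x_j$: the iteration-$0$ coloring is the first component of $\chi^{(1)}$, so from $\chi^{(1)}_{d-1,S}(\mathbf{x}[y/i])$ we read off the distance matrix of $\mathbf{x}[y/i]$, and its $i$-th row lists $d(x_j,y)$ for all $j\ne i$. Since $d>2$, letting $i$ range over $\{1,\ldots,d-1\}$ recovers all of $d(x_1,y),\ldots,d(x_{d-1},y)$ (e.g.\ $i=2$ supplies $d(x_1,y)$ and $i=1$ supplies $d(x_j,y)$ for $j\ge 2$). Next I would apply Lemma~\ref{c.radius} to the tuple $\mathbf{x}[y/i]\in S^{d-1}$, feeding it $\chi^{(1)}_{d-1,S}(\mathbf{x}[y/i])$ together with the given multiset $\leftm\chi^{(1)}_{d-1,S}(\mathbf{y})\mid\mathbf{y}\in S^{d-1}\rightm$; this outputs $(d(x_1,b),\ldots,d(x_{i-1},b),d(y,b),d(x_{i+1},b),\ldots,d(x_{d-1},b))$, whose $i$-th coordinate is $d(b,y)$. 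Combining the two steps gives $(d(b,y),d(x_1,y),\ldots,d(x_{d-1},y))$ for every $y$, hence the full distance profile after collecting over $y\in S$.

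I do not expect a genuine obstacle here: the only nontrivial content, extracting $d(b,y)$, has already been isolated in Lemma~\ref{c.radius}, and the rest is unpacking iteration-$0$ distance matrices. The one point that needs care is bookkeeping: the colorings $\chi^{(1)}_{d-1,S}(\mathbf{x}[y/1]),\ldots,\chi^{(1)}_{d-1,S}(\mathbf{x}[y/(d-1)])$ must be kept bundled together for a single $y$, so that all of $d(x_1,y),\ldots,d(x_{d-1},y)$ and $d(b,y)$ are extracted for the \emph{same} point before assembling the output tuple — and this is automatic from the update rule, which groups these colorings per $y$ inside the round-$2$ multiset. I would also remark explicitly that $d>2$ is exactly what makes both Lemma~\ref{c.radius} applicable and the ``for every $j$ there is an $i\ne j$'' argument work.
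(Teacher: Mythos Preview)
Your proposal is correct and follows essentially the same route as the paper: unpack $\chi^{(2)}_{d-1,S}(\mathbf{x})$, read off the distances $d(x_j,y)$ from the $\chi^{(0)}$-part of the $\chi^{(1)}_{d-1,S}(\mathbf{x}[y/i])$, and invoke Lemma~\ref{c.radius} on some $\chi^{(1)}_{d-1,S}(\mathbf{x}[y/i])$ together with the given multiset to recover $d(y,b)$. Your write-up is simply more explicit about the per-$y$ bookkeeping and the role of $d>2$, but there is no substantive difference.
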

\begin{proof}
   Since $d>2$, we have  
    \[\chi^{(2)}_{d-1,S}(\mathbf{x}) = \Big(\chi^{(1)}_{k,S}(\mathbf{x}), \leftm \left(\chi^{(1)}_{d-1,S}(\mathbf{x}[y/1]),\ldots, \chi^{(1)}_{d-1,S}(\mathbf{x}[y/k])\right)
    \mid y\in S\rightm\Big).\]
    From the tuple $\left(\chi^{(0)}_{d-1,S}(\mathbf{x}[y/1]),\ldots, \chi^{(0)}_{d-1,S}(\mathbf{x}[y/k])\right)$ we can restore the tuple of distances 
    $$(d(y, x_1), \ldots, d(y, x_{d-1})).$$ 
    In turn, by Lemma \ref{c.radius}, from  $\chi^{(1)}_{d-1,S}(\mathbf{x}[y/1])$ and $\leftm \chi_{d-1,S}^{(1)}(\mathbf{y}) \mid \mathbf{y}\in S^{d-1}\rightm$, we can restore $d(y, b)$. Hence, we can restore the whole distance profile of $(b, x_1, \ldots, x_{d-1})$.
\end{proof}

Finally, we show that knowing $\chi_{d-1,S}^{(3)}(\mathbf{x})$ for $\mathbf{x} = (x_1, \ldots, x_{d-1})$ and the multiset $\leftm \chi_{d-1,S}^{(1)}(\mathbf{y}) \mid \mathbf{y}\in S^{d-1}\rightm$, we can compute
\[\{EP(x_1, \ldots, x_{d-1}, y)\mid y\in S\}.\]

Since $d>2$, we have  
\[\chi^{(3)}_{d-1,S}(\mathbf{x}) = \Big(\chi^{(2)}_{d-1,S}(\mathbf{x}), \leftm \left(\chi^{(2)}_{d-1,S}(\mathbf{x}[y/1]),\ldots, \chi^{(2)}_{d-1,S}(\mathbf{x}[y/k])\right)
    \mid y\in S\rightm\Big).\]
    For every $y\in S$, knowing $\chi^{(2)}_{d-1,S}(\mathbf{x})$ and $\left(\chi^{(2)}_{d-1,S}(\mathbf{x}[y/1]),\ldots, \chi^{(2)}_{d-1,S}(\mathbf{x}[y/k])\right)$, we have to compute $EP(x_1, \ldots, x_{d-1}, y)$, that is, the distance matrix of $(b, x_1, \ldots, x_{d-1}, y)$ and the distance profiles of the tuples 
    \[
    (b, x_2, \ldots, x_{d-1}, y), \ldots, (x_1, \ldots, x_{d-2},b, y), (x_1, \ldots, x_{d-1}, b).\]
    Distance profiles can be computed by Lemma \ref{l.profile}.  The distances amongst elements of $\{x_1, \ldots, x_{d-1},y\}$ can be computed by definition from  $\chi^{(0)}_{d-1,S}(\mathbf{x})$ and $\left(\chi^{(0)}_{d-1,S}(\mathbf{x}[y/1]),\ldots, \chi^{(0)}_{d-1,S}(\mathbf{x}[y/k])\right)$. Distances to $b$ from these points can be computed by Lemma \ref{c.radius} from $\chi^{(1)}_{d-1,S}(\mathbf{x})$ and $\left(\chi^{(1)}_{d-1,S}(\mathbf{x}[y/1]),\ldots, \chi^{(1)}_{d-1,S}(\mathbf{x}[y/k])\right)$. 

Now that we have the enhanced profiles, we have to select one for which $(x_1,\dots,x_d)$ satisfies the cone condition. 
We first observe that, knowing $EP(x_1, \ldots, x_d)$, we can reconstruct $(b, x_1, \ldots, x_d)$  up to an isometry by Lemma \ref{l.anchor} (because inside $EP(x_1, \ldots, x_d)$ we are given the distance matrix of $(b, x_1, \ldots, x_d)$). This means that from $EP(x_1, \ldots, x_d)$ we can compute any function of $b, x_1, \ldots, x_d$ which is invariant under isometries. In particular, we can compute $\ad(b, x_1, \ldots, x_d)$. We will refer to $\ad(b, x_1, \ldots, x_d)$ as the  dimension of the corresponding enhanced profile.

We show that $\ad(S)$ is equal to the maximal  dimension of an enhanced profile. Indeed, first notice that $\ad(S) = \ad(\{b\}\cup S)$ because  $b$ is a convex combination of points of $S$. In turn, $\ad(\{b\}\cup S)$ is equal to the maximal $k$ for which one can choose $k$ points $x_1, \ldots, x_k\in S$ such that $x_1 - b, \ldots, x_k - b$ are linearly independent. Obviously, $k$ is bounded by the dimension of the space.  Hence, there will be an enhanced profile with the same maximal dimension $k$.

 If $\ad(S) < d$,  any enhanced profile with maximal dimension satisfies the initialization requirement, and we are done. Assume therefore that $\ad(S) = d$. Our task is to output some $EP(x_1, \ldots, x_{d})$ such that  
 \begin{enumerate}
     \item $\ad(b, x_1, \ldots, x_{d}) = d$, and
     \item there is no $x\in S$ such that $x - b$ belongs to the interior of $\cone(x_1 - b, \ldots, x_{d} - b)$. 
 \end{enumerate}
 
 For that, among all $d$-dimensional enhanced profiles, we output one which minimizes the solid angle at the origin, defined as 
\begin{equation}
    \label{int}
    \mathsf{Angle}(x_1-b,\dots,x_d-b)=\frac1d\mathsf{Vol}\left\{x\in\cone(x_1-b,\dots, x_d-b):\ \|x\|\le 1 \right\}
\end{equation}
(the solid angle is invariant under isometries, and hence can be computed from $EP(x_1, \ldots, x_d)$).

We have to show that for all $x\in S$ we have that $x-b$ lies outside the interior of $\cone(x_1 -  b, \ldots, x_{d} - b)$. To prove this, we need an extra lemma. We will say that a cone $C$ is \emph{simple} if $C=\cone(u_1, \ldots, u_{d})$, for some linearly independent $u_1, \ldots, u_{d}$. Observe that if $C = \cone(u_1, \ldots, u_{d})$ is a simple cone, then the interior of $C$ is the set  \[
int(C)=\{\lambda_1 u_1 +\ldots +\lambda_{d} u_{d}\mid \lambda_1, \ldots, \lambda_{d}\in (0,+\infty)\}.
\]
Note also that the boundary of $C$ consists of $d$ faces
\[F_i = C \cap \ls(u_1, \ldots, u_{i -1}, u_{i+1}, \ldots, u_{d}), \quad  i = 1, \ldots, d.
\]
\begin{Lemma}\label{l.volume}
    Let $C=\cone(\mathbf{u})\subseteq \mathbb R^d$ for $\mathbf u=(u_1,\dots,u_d)$ be a simple cone and let $y$ belong to the interior of $C$. Then  $\cone(\mathbf u[y/1])$ is a simple cone and $\mathsf{Angle}(\mathbf u[y/1]) < \mathsf{Angle}(\mathbf u)$.
\end{Lemma}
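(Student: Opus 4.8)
My plan is to prove the two assertions of the lemma in turn: first that $\cone(\mathbf u[y/1]) = \cone(y, u_2, \dots, u_d)$ is again a simple cone, and then the strict decrease of the solid angle. Throughout I would work in the basis $u_1, \dots, u_d$ (which is a basis of $\mathbb R^d$ by simplicity of $C$) and use the description of the interior of a simple cone recalled just before the lemma.

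For simplicity of the new cone: since $y\in\mathrm{int}(C)$, write $y=\lambda_1 u_1+\dots+\lambda_d u_d$ with all $\lambda_i>0$. If $y,u_2,\dots,u_d$ were linearly dependent, there would be a nontrivial relation among them; its coefficient on $y$ cannot vanish because $u_2,\dots,u_d$ are already independent, so $y\in\ls(u_2,\dots,u_d)$. Substituting the expression for $y$ gives $\lambda_1 u_1\in\ls(u_2,\dots,u_d)$ with $\lambda_1\neq 0$, contradicting independence of $u_1,\dots,u_d$. Hence $y,u_2,\dots,u_d$ are independent and $\cone(\mathbf u[y/1])$ is simple.

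For the inequality, set $K=\cone(y,u_2,\dots,u_d)$. Plugging $y=\sum_j\lambda_j u_j$ into a generic element $\mu_1 y+\sum_{i\ge 2}\mu_i u_i$ of $K$ (all $\mu_i\ge 0$) shows its $u_1$-coordinate is $\mu_1\lambda_1\ge 0$ and its $u_i$-coordinate ($i\ge 2$) is $\mu_1\lambda_i+\mu_i\ge 0$; thus $K\subseteq C$. Next, $u_1\notin K$: from $u_1=\mu_1 y+\sum_{i\ge 2}\mu_i u_i$ with $\mu_j\ge 0$ one would get, comparing coordinates, $\mu_1\lambda_1=1$ (so $\mu_1>0$) and $\mu_i=-\mu_1\lambda_i<0$ for $i\ge 2$, a contradiction. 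So $K$ is a proper closed subset of $C$. Since $C$ is full-dimensional and closed, $C=\overline{\mathrm{int}(C)}$, so $\mathrm{int}(C)\not\subseteq K$ (otherwise $C\subseteq\overline{K}=K$, contradicting $u_1\in C\setminus K$); hence $U:=\mathrm{int}(C)\setminus K$ is a nonempty open set. Because $C$ and $K$ are both invariant under scaling by positive reals, $U$ is too, so for any $p\in U$ and small $t>0$ we have $tp\in U$ with $\|tp\|<1$; taking a small ball around such $tp$ yields a positive-measure open set contained in $\{x\in C:\|x\|\le 1\}$ and disjoint from $K$. Combined with $\{x\in K:\|x\|\le 1\}\subseteq\{x\in C:\|x\|\le 1\}$, this gives $\mathsf{Vol}\{x\in K:\|x\|\le 1\}<\mathsf{Vol}\{x\in C:\|x\|\le 1\}$, and dividing by $d$ gives $\mathsf{Angle}(\mathbf u[y/1])<\mathsf{Angle}(\mathbf u)$.

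The step I expect to require the most care is making the angle inequality strict rather than merely non-strict: a proper containment of cones could in principle leave the solid angle unchanged if the missing region had measure zero, so the argument must exhibit an honestly full-dimensional piece of $C$ outside $K$. This is exactly what the scaling-invariance of the two cones provides, by transporting the open witness set $\mathrm{int}(C)\setminus K$ (nonempty thanks to $u_1\in C\setminus K$ and $C=\overline{\mathrm{int}(C)}$) to one that meets the unit ball in positive measure. Everything else is routine coordinate bookkeeping in the basis $u_1,\dots,u_d$, and the constant $\tfrac1d$ in the definition of $\mathsf{Angle}$ plays no role.
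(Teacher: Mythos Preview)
Your proof is correct, and the overall skeleton matches the paper's: write $y=\sum_i\lambda_i u_i$ with all $\lambda_i>0$, deduce linear independence of $y,u_2,\dots,u_d$, establish $K:=\cone(\mathbf u[y/1])\subseteq C$, and then argue that the set difference intersects the unit ball in positive volume.

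The only genuine divergence is in how strictness is obtained. The paper works explicitly in coordinates: it shows that any $x=\sum_j\mu_j u_j\in K$ with $\mu_1>0$ must satisfy $\mu_2/\mu_1\ge\lambda_2/\lambda_1$, and then exhibits a small parallelepiped $P\subset C\cap\{\|x\|\le 1\}$ whose points all violate this ratio condition. You instead give a cleaner topological argument: $u_1\in C\setminus K$ forces $K\subsetneq C$; since $C=\overline{\mathrm{int}(C)}$ and $K$ is closed, $\mathrm{int}(C)\setminus K$ is a nonempty open set; scaling invariance of both cones then transports an open ball into the unit ball. Your route avoids the explicit parallelepiped construction and the ratio bookkeeping, at the cost of invoking the (standard) fact that a full-dimensional closed convex set equals the closure of its interior. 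Both arguments are short; yours is slightly more conceptual, the paper's slightly more hands-on.
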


\begin{proof}
Since $y$ belongs to the interior of $\cone(\mathbf u)$, we have that 
\[y = \lambda_1 u_1 +\ldots + \lambda_d u_d,\]
for some $\lambda_1 > 0, \ldots, \lambda_d > 0$. The fact that $\lambda_1 > 0$ implies that $y, u_2, \ldots, u_d$ are linearly independent, and hence $\cone(\mathbf u[y/1])$ is a simple cone. Since $y\in \cone(\mathbf{u})$, we have that $\cone(\mathbf u[y/1])\subseteq \cone(\mathbf{u})$. Thus, to show that $\mathsf{Angle}(\mathbf u[y/1]) < \mathsf{Angle}(\mathbf u)$, it suffices to show that the volume of
\[\{x\in \cone(\mathbf u):\ \|x\| \le 1\}\setminus \{x\in \cone(\mathbf u[y/1]):\ \|x\| \le 1\}\]
is positive. We claim that  for any point $x = \mu_1 u_1 + \ldots +\mu_d u_d\in \cone(\mathbf u[y/1])$ we have $\mu_1 > 0 \implies \mu_2/\mu_1 \ge \lambda_2/\lambda_1$.  This is because $x$ can be written as a non-negative linear combination of $y, u_2, \ldots, u_d$. 
Since $\mu_1 > 0$, the coefficient in front of $y$ in this linear combination must be positive. Now, if the coefficient in front of $u_2$ is $0$, then the ratio between $\mu_2$ and $\mu_1$ is exactly as the ratio between $\lambda_2$ and $\lambda_1$, and if the coefficient before $u_2$ is positive, $\mu_2$ can only increase.

This means that no point of the form
\begin{equation}
\label{no_point}
x = \mu_1 u_1 + \ldots +\mu_d u_d, \qquad 0 < \mu_1,\,\, \mu_2/\mu_1 < \lambda_2/\lambda_1
\end{equation}
belongs to $\cone(\mathbf u[y/1])$. It remains to show that the set of points that satisfy \eqref{no_point} and lie in $\{x\in \cone(\mathbf u):\ \|x\| \le 1\}$ has positive volume.

Indeed, for any  $\varepsilon > 0$ and $\delta > 0$, consider a $d$-dimensional parallelepiped:
\[P = \{\mu_1 x_1 +\ldots + \mu_d x_d \mid  \mu_1 \in [\varepsilon/2, \varepsilon], \,\mu_2, \ldots, \mu_d\in[\delta/2, \delta]\}.\]
Regardless of $\varepsilon$ and $\delta$, we have that $P$ is a subset of $\cone(\mathbf{u})$ and its volume is positive. For all sufficiently small $\varepsilon, \delta$, we have that $P$ is a subset of the unit ball $\{x\in\mathbb{R}^d\mid \|x\| \le 1\}$. In turn, by choosing $\varepsilon$ to be sufficiently big compared to $\delta$, we ensure that all points of $P$ satisfy \eqref{no_point}.
\end{proof}

Now we can finish the proof of Proposition \ref{P.init}. Let $\mathbf x\in S^d$ minimize $\mathsf{Angle}(x_1-b,\dots,x_d-b)$ amongst $\mathbf x\in S^d$ such that $\ad(b,x_1,\dots,x_d)=d$. Assume that $\mathbf x$ does not satisfy the cone condition, and let $x\in S$ be such that $x - b\in Int(\cone(x_1 -  b, \ldots, x_{d} - b))$ holds. Then, by Lemma \ref{l.volume},  $\mathsf{Angle}(x -  b, x_2 - b \ldots, x_{d} - b)$ is strictly smaller than $\mathsf{Angle}(x_1 -  b, x_2 - b \ldots, x_{d} - b)$. As $\cone(x -  b, x_2 - b \ldots, x_{d} - b)$ is a simple cone, $x - b, x_2 - b, \ldots, x_{d} - b$ are linearly independent, and thus $\ad(b, x, x_2, \ldots, x_{d})=d$, contradicting the minimality hypothesis on $\mathbf x$, as desired.
\end{proof}

We now proceed with the reconstruction phase. 

\subsection{Reconstruction Algorithm} 

Assume an Initialization Data $(A,M_1,\dots,M_d)$ for a finite $S\subset \mathbb{R}^d$ is given. Our first task is to choose, up to isometry, positions for the points in the $(d+1)$-tuple $(b,x_1,\dots,x_d)$ corresponding to the matrix $A$.  We use the following classical lemma, whose proof is given e.g. in \cite[Sec. 2.2.1]{cox2001multidscaling}. 
\begin{Lemma}[Anchor Lemma]
\label{l.anchor}
     If $(u_1, \ldots, u_k) \in \mathbb{R}^d$ and $(v_1,\dots,v_k)\in \mathbb{R}^d$ have the same distance matrix, then there exists an isometry $f\colon\mathbb{R}^{d}\to\mathbb{R}^{d}$ such that $f(u_i) = v_i$ for all $i = 1, \ldots, k$.
\end{Lemma}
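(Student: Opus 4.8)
The plan is to reduce the statement to the classical fact that two $k$-tuples of vectors in $\mathbb{R}^d$ with the same Gram matrix differ by an orthogonal transformation. First I would translate both tuples so that their first points sit at the origin: put $u_i' = u_i - u_1$ and $v_i' = v_i - v_1$. Distances are translation invariant, so $(u_1',\dots,u_k')$ and $(v_1',\dots,v_k')$ still have the same distance matrix, and now $u_1' = v_1' = 0$.

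Next I would pass from distances to inner products by polarization. For all $i,j$ one has $\langle u_i',u_j'\rangle = \tfrac12\bigl(\|u_i'\|^2 + \|u_j'\|^2 - \|u_i'-u_j'\|^2\bigr)$, and since $u_1' = 0$ we get $\|u_i'\|^2 = d(u_i,u_1)^2$ and $\|u_i'-u_j'\|^2 = d(u_i,u_j)^2$. Hence the Gram matrix $G^u = (\langle u_i',u_j'\rangle)_{ij}$ is determined by the distance matrix of $(u_1,\dots,u_k)$ alone, and likewise for $G^v$; as the distance matrices coincide, $G^u = G^v =: G$.

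The core step is then to build $Q \in O(d)$ with $Q u_i' = v_i'$ for all $i$. I would fix a subset of indices $I$ for which $(u_i')_{i\in I}$ is a basis of $\ls(u_1',\dots,u_k')$. For any coefficient vector $c$, $\bigl\|\sum_i c_i u_i'\bigr\|^2 = c^\top G\,c = \bigl\|\sum_i c_i v_i'\bigr\|^2$, so the linear dependencies among the $v_i'$ are exactly those among the $u_i'$; in particular $(v_i')_{i\in I}$ is a basis of $\ls(v_1',\dots,v_k')$, and both spans have dimension $\operatorname{rank}(G)$. Define a linear map $Q_0$ on $\ls(u_1',\dots,u_k')$ by $u_i'\mapsto v_i'$ for $i\in I$; the identity above shows it is well defined on the whole span and preserves inner products there, hence $Q_0 u_i' = v_i'$ for every $i$. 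Since the orthogonal complements of the two spans in $\mathbb{R}^d$ have the same dimension $d - \operatorname{rank}(G)$, I extend $Q_0$ to a linear isometry $Q$ of $\mathbb{R}^d$ by sending an orthonormal basis of one complement to an orthonormal basis of the other. Then $f(x) = Q(x - u_1) + v_1$ is a bijective isometry of $\mathbb{R}^d$ with $f(u_i) = v_i$ for all $i$.

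The main obstacle is the passage from ``equal Gram matrices'' to ``related by an orthogonal map'' when the tuple is linearly dependent: one must verify both that $\sum_i c_i u_i' = 0$ implies $\sum_i c_i v_i' = 0$ and that the induced map on the span is inner-product preserving, and both are immediate from the single quadratic-form identity $\|\sum_i c_i u_i'\|^2 = c^\top G c = \|\sum_i c_i v_i'\|^2$. The final extension to all of $\mathbb{R}^d$ only uses that the ambient dimension is the same on both sides. Everything else is routine.
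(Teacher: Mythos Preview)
Your argument is correct and is precisely the classical Gram-matrix proof: translate to put the first point at the origin, recover the Gram matrix from the distance matrix via polarization, use the quadratic-form identity to see that linear relations and inner products are preserved, and extend the resulting partial isometry to an orthogonal map of $\mathbb{R}^d$. The paper does not give its own proof of this lemma; it simply cites the multidimensional-scaling literature, where exactly this argument appears, so your write-up fills in what the paper left as a reference.
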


As for $d=2$, we put the barycenter of the cloud at the origin.
Then, we simply apply the Anchor Lemma to any collection of points $z_1,\dots,z_d \in \mathbb{R}^d$ such that our given $A$ is also the distance matrix of the tuple $(0,z_1,\dots,z_d)$. The Lemma then gives us an isometry $f\colon\mathbb{R}^{d}\to\mathbb{R}^{d}$ such that 
$f(b) = 0, f(x_1) = z_1,\ldots, f(x_{d}) = z_{d}$. 
As distance profiles are invariant under isometries, our given $M_i$ is also the distance profile of the tuple $(z_1, \ldots, z_{i - 1},0, z_{i+1}, \ldots, z_{d})$ w.r.t.~$f(S)$. Our task now is, from $M_1,\dots,M_d$, to uniquely determine the locations of all  points in $f(S)$. This would give us  $S$ up to an isometry. 
Since now we have locations for $(z_1,\dots,z_d)$,  we can in fact compute:    
\[
\ad(S)=\ad(b, x_1, \ldots, x_{d}) = \ad(0, z_1, \ldots, z_{d}) = \ld(z_1, \ldots, z_{d}), 
\]
where the first equality is guaranteed by the cone condition. The reconstruction algorithm depends on whether $\ad(S)=d$ or not. 

Consider first the case when $ \ad(S) =  \ld(z_1, \ldots, z_{d}) < d$. Then there exists $i \in \{ 1, \ldots, d\}$ such that $\ld(z_1, \ldots, z_{d}) = \ld(z_1, \ldots, z_{i - 1}, z_{i+1}, \ldots,  z_{d})$. This means that
$f(S)\subset\as( z_1, \ldots, z_{i-1},0, z_{i+1}, \ldots ,z_{d})$,  
since otherwise $f(S)$ would have larger affine dimension. 
We claim that, in this case, from $M_i$ we can restore the location of all points in $f(S)$. Indeed, from $M_i$ we know, for each $z \in f(S)$, a tuple with the distances from $z$ to $ z_1,\dots,z_{i-1},0,z_{i+1}, \dots, z_d$. 
As the next lemma shows, this information is enough to uniquely determine the location of $z$. 

\begin{Lemma}
\label{l.unique}
    Take any $x_1, \ldots, x_m\in\mathbb{R}^{d}$. Assume that $a, b\in \as(x_1, \ldots, x_m)$ are such that $d(a, x_i) = d(b, x_i)$ for all $i = 1, \ldots, m$. Then $a = b$.
\end{Lemma}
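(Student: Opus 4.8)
The plan is to reduce the statement to a two-line computation with inner products, exploiting the classical fact that the locus of points equidistant from two distinct points $a,b$ is the perpendicular-bisector hyperplane, which cannot contain $a$ itself.

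First I would record the equidistance hypotheses in squared form: for each $i$ we have $d(a,x_i)^2 = d(b,x_i)^2$. Expanding $\|a - x_i\|^2 = \|a\|^2 - 2\langle a, x_i\rangle + \|x_i\|^2$ and likewise for $b$, the $\|x_i\|^2$ terms cancel and we are left with the affine-linear identity
\[
2\langle a - b,\, x_i\rangle = \|a\|^2 - \|b\|^2, \qquad i = 1,\dots,m.
\]

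Next, since $x \mapsto 2\langle a-b, x\rangle$ is an affine function of $x$ and the right-hand side is a constant, the identity propagates to affine combinations: if $x = \sum_i \lambda_i x_i$ with $\sum_i \lambda_i = 1$, then $2\langle a-b, x\rangle = \sum_i \lambda_i(\|a\|^2 - \|b\|^2) = \|a\|^2 - \|b\|^2$. Hence the identity holds for every $x \in \as(x_1,\dots,x_m)$. Finally, the hypothesis of the lemma is exactly that $a,b \in \as(x_1,\dots,x_m)$, so we may substitute $x = a$ and $x = b$:
\[
2\langle a-b,\, a\rangle = \|a\|^2 - \|b\|^2 = 2\langle a-b,\, b\rangle,
\]
which gives $\langle a-b, a-b\rangle = 0$, i.e.\ $a = b$.

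I do not expect any genuine obstacle; the only point worth flagging is that the conclusion really does use the membership $a,b \in \as(x_1,\dots,x_m)$, and is false without it — reflecting $a$ across that affine subspace yields a distinct point at the same distance from every $x_i$. This is precisely why, in the reconstruction algorithm, the lemma is applied only after we have ensured that $f(S)$, and in particular the point being located, lies in the affine span of the anchor points $z_1,\dots,z_{i-1},0,z_{i+1},\dots,z_d$.
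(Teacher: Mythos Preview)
Your proof is correct and follows essentially the same approach as the paper: both expand the squared-distance equalities into inner-product identities and then exploit the membership $a,b\in\as(x_1,\ldots,x_m)$ to force $a=b$. The only cosmetic difference is that the paper translates by $x_1$ and phrases the conclusion as uniqueness of a solution to a linear system over $\ls(x_2-x_1,\ldots,x_m-x_1)$, whereas you propagate the identity to arbitrary affine combinations and substitute $x=a,b$ directly; your version is arguably a touch cleaner, but the two arguments are the same computation in different clothing.
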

\begin{proof}
    We claim that for every $i = 2, \ldots, m$ we have $\langle a - x_1, x_i - x_1 \rangle = \langle b - x_1, x_i - x_1\rangle$. This is because $a-x_1$ and $b-x_1$ have the same distance to $x_i - x_1$ (which is equal to $d(a, x_i) = d(b, x_i)$) and, moreover, the norms of $a-x_1$ and $b-x_1$ coincide (and are equal to $d(a, x_1) = d(b, x_1)$). Hence, both $a - x_1$ and $b - x_1$ are solutions to the following linear system of equations:
    \begin{align*}
        \langle x, x_i - x_1\rangle = \langle a - x_1, x_i - x_1\rangle, \qquad i = 2, \ldots, m.
    \end{align*}
    This system has at most 1 solution over $x\in \ls(x_2 -x_1, \ldots, x_m - x_1)$. Moreover, $a - x_1$ and $b - x_1$ are both from $\ls(x_2 -x_1, \ldots, x_m - x_1)$ because $a, b\in \as(x_1, \ldots, x_m)$. Hence, $a - x_1 = b - x_1$, and $a = b$.
\end{proof}

It remains to reconstruct $f(S)$ when $\ad(S) = d$, in which case our pivot points $z_1, \ldots, z_{d}$ are linearly independent. Recall that $M_i$ is the distance profile of the tuple $(z_1, \ldots, z_{i-1},0, z_{i+1}, \ldots, z_{d})$ w.r.t.~$f(S)$. Moreover, since no $x$ in $S$ is such that $x-b$ lies in the interior of $\cone(x_1 - b, \ldots, x_{d} - b)$, we know that $f(S)$ must be disjoint from the interior of $\cone(z_1, \ldots, z_{d})$. As the next proposition shows, this information is enough to reconstruct $f(S)$ in this case as well, which finishes the proof of Theorem \ref{main} for $d>2$. 

\begin{Proposition}[The Reconstruction Lemma]\label{l.cone}
Assume that $z_1, \ldots, z_{d}\in\mathbb{R}^{d}$ are linearly independent. 
Let $T\subseteq \mathbb{R}^{d}$ be finite and disjoint from the interior of $\cone(z_1, \ldots, z_{d})$. If, for every $i = 1, \ldots, d$, we are given $z_i$ and also the distance profile of the tuple $(z_1, \ldots, z_{i-1},0, z_{i+1}, \ldots, z_{d})$ w.r.t.~$T$,  then we can uniquely determine $T$.
\end{Proposition}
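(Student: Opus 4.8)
The plan is to mirror the two-dimensional argument of Section~2. For each $i\in\{1,\dots,d\}$ put $H_i=\ls(z_1,\dots,z_{i-1},z_{i+1},\dots,z_d)$, the hyperplane through $0$ which is also the affine span of the $d$ affinely independent points $z_1,\dots,z_{i-1},0,z_{i+1},\dots,z_d$, and let $\mathsf{Refl}_{H_i}$ denote the Euclidean reflection across $H_i$. The first step is to extract, from the $i$-th given distance profile, for every (unknown) $y\in T$ a set of at most two \emph{candidate locations}: writing $\mathbb{R}^d=H_i\oplus\mathbb{R}n_i$ with $n_i\perp H_i$, Lemma~\ref{l.unique} applied inside $H_i$ determines the $H_i$-component of $y$ from its distances to those $d$ points, and Pythagoras then determines the length of the $n_i$-component; so the locus of points with the prescribed distances is either a single point of $H_i$ (necessarily $y$ itself), or a pair $\{p,\mathsf{Refl}_{H_i}(p)\}$ with $p\notin H_i$, one of which is $y$. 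Thus from the $i$-th profile I recover the multiset $\leftm C_i(y)\mid y\in T\rightm$ of candidate sets.

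Next I would set up the same ``forbidden region'' machinery as in Section~2. I maintain a set $R\subseteq T$ of reconstructed points, a closed cone $F$ with the invariant $(T\setminus R)\cap F=\emptyset$, and, for each $i$, the distance profile of $(z_1,\dots,z_{i-1},0,z_{i+1},\dots,z_d)$ with respect to $T\setminus R$ (which can be updated whenever a point enters $R$, since a reconstructed point is known exactly). \textbf{Stage A:} reconstruct every $y\in T$ lying on $\bigcup_i H_i$ --- these are exactly the $y$ whose candidate set $C_i(y)$ is a singleton for some $i$ --- and put them into $R$; since $\bigcup_i H_i$ contains the boundary of $\cone(z_1,\dots,z_d)$ and $T$ avoids the interior of that cone, $T\setminus R$ is now disjoint from the whole closed cone, so I set $F:=\overline{\cone}(z_1,\dots,z_d)$. \textbf{Stage B (iteration):} for each $i$ and each two-element candidate set $\{p,\mathsf{Refl}_{H_i}(p)\}$ still present, the invariant forces at most one of $p,\mathsf{Refl}_{H_i}(p)$ into $F$, so if exactly one lies in $F$ the other is the true location --- add it to $R$ and update the profiles; after processing all such pairs, enlarge $F$ to $F\cup\bigcup_{i=1}^d\mathsf{Refl}_{H_i}(F)$. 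The invariant survives this enlargement by the argument of Section~2: a point $v\in T\setminus R$ with $\mathsf{Refl}_{H_i}(v)\in F_{\mathrm{old}}$ is not in $H_i$ and not in $F_{\mathrm{old}}$, so the pair $\{v,\mathsf{Refl}_{H_i}(v)\}$ would just have been resolved.

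To finish, the iteration must terminate with $R=T$. Observe that $F$ is always a finite union of images of $\overline{\cone}(z_1,\dots,z_d)$ under the group $W:=\langle\mathsf{Refl}_{H_1},\dots,\mathsf{Refl}_{H_d}\rangle$. If the iteration stalled with $R\subsetneq T$, then $F$ would stabilize at a $W$-invariant set $F_\infty\supseteq\overline{\cone}(z_1,\dots,z_d)$, and then for any $v\in T\setminus R$ the entire $W$-orbit of $v$ would avoid $F_\infty$, i.e.\ $v\notin\bigcup_{w\in W}w\cdot\overline{\cone}(z_1,\dots,z_d)$. So it suffices to prove the \textbf{covering claim}: if $z_1,\dots,z_d$ are linearly independent, then $\bigcup_{w\in W}w\cdot\overline{\cone}(z_1,\dots,z_d)=\mathbb{R}^d$, and in fact this already holds after boundedly many reflections, so that $F$ fills $\mathbb{R}^d$ in finitely many iterations.

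The covering claim is the main obstacle. For $d=2$ it is exactly the fact used in Section~2 that the forbidden wedge gains amplitude $2\alpha_{uv}$ at each step until it wraps around, with $W$ infinite dihedral. For $d>2$ the difficulty is that $W$ need not be discrete, so I would argue by contradiction via a monotonicity: given $x\notin\bigcup_{w}w\cdot\overline{\cone}(z_1,\dots,z_d)$, its $W$-orbit lies on a sphere, and if $w^{-1}x$ is chosen (nearly) closest to $\overline{\cone}(z_1,\dots,z_d)$, reflecting across a facet-hyperplane whose outward normal appears in the normal cone of $\overline{\cone}(z_1,\dots,z_d)$ at the nearest point strictly decreases that distance, contradicting minimality. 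The delicate point --- and the step I expect to cost the most work --- is ruling out the case where the nearest point lies in a face of codimension $\ge 2$: there I would induct on $d$, restricting the reflections across the hyperplanes through that face to the orthogonal complement of its linear span, where they act again as the facet-reflections of a lower-dimensional simplicial cone and the inductive covering statement applies. Turning this face analysis into a clean argument, and extracting an explicit bound on the number of iterations, is where the real care is needed.
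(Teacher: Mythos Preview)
Your setup and the iterative scheme match the paper's proof almost exactly: the hyperplanes $H_i$ (the paper calls them $P_i$), the two candidate locations from the Symmetric Lemma, Stage~A (points on $\bigcup_i H_i$), and the update $F\mapsto F\cup\bigcup_i\mathsf{Refl}_{H_i}(F)$ are all present in the paper in the same form. The only substantive divergence is the termination argument, and that is where your proposal has a genuine gap while the paper has a clean device you are missing.

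The paper does \emph{not} attempt your covering claim. Immediately after Stage~A it computes, from the candidate locations themselves, a number $\varepsilon>0$ with $\min_i d(t,H_i)\ge\varepsilon$ for every remaining $t\in T$ (this is possible because the candidate sets give a finite superset $T'\supseteq T$), and it enlarges the initial forbidden region to
\[
A_0=\cone(z_1,\dots,z_d)\ \cup\ \bigl\{x\in\mathbb{R}^d:\ \min_i d(x,H_i)<\varepsilon\bigr\}.
\]
Termination then follows from a linear potential: set $\gamma(x)=\sum_{i=1}^d\langle x,z_i\rangle$. If $x\notin A_0$ then some facet hyperplane $H_i$ separates $x$ from the cone, and reflecting across it increases $\gamma$ by $2\,d(x,H_i)\,d(z_i,H_i)\ge c\varepsilon$, where $c=2\min_i d(z_i,H_i)>0$. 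Since $\gamma$ is bounded on the sphere of radius $|x|$, only finitely many reflections are possible before $x$ lands in $A_0$. No group theory, no induction on $d$, and an explicit iteration bound falls out.

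Your route through the covering claim runs into a real obstacle. When the dihedral angles of the cone are not rational multiples of $\pi$, the group $W=\langle\mathsf{Refl}_{H_1},\dots,\mathsf{Refl}_{H_d}\rangle$ is not discrete, so the $W$-orbit of $x$ need not contain an element closest to $\overline{\cone}(z_1,\dots,z_d)$, and the ``nearly closest'' variant does not give a contradiction: reflecting a point at distance $r$ across the facet containing its nearest point of $\overline C$ does strictly decrease the distance, but the decrement is governed by how far one can push into $\overline C$ along the inward normal at that nearest point, and this quantity tends to $0$ as the nearest point approaches a lower-dimensional face. So you cannot force the infimum of the distance over the orbit to drop below any given level. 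Your proposed induction on $d$ for the codimension~$\ge 2$ faces might patch the case where the nearest point sits exactly on such a face, but it does not control this degeneration along a sequence. The $\varepsilon$-thickening sidesteps all of this in one line, precisely because $T$ is finite and (after Stage~A) stays uniformly away from the $H_i$.
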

\begin{proof}
  Let $P_i = \as(z_1, \ldots, z_{i -1}, 0, z_{i+1}, \ldots, z_{d})$. As the following lemma shows, knowing the distances from $s\in T$ to $z_1, \ldots, z_{i-1}, 0, z_{i+1}, z_{d}$, we can determine the position of $s$ uniquely up to the reflection through $P_i$.

    \begin{Lemma}[The Symmetric Lemma] \label{l.symmetric}
    Let $x_1, \ldots, x_m\in\mathbb{R}^{d}$ be such that $\as(x_1, \ldots, x_m)$ has dimension $d-1$.
    Assume that $a, b\in \mathbb{R}^{d}$ are such that $d(a, x_i) = d(b, x_i)$ for all $i = 1, \ldots, m$. Then either $a = b$ or $a$ and $b$ are symmetric w.r.t.~$\as(x_1, \ldots, x_m)$. 
\end{Lemma}
    \begin{proof}
       As in the proof of Lemma \ref{l.unique}, we have that $\langle a - x_1, x_i - x_1 \rangle = \langle b - x_1, x_i - x_1\rangle$ for every $i = 2, \ldots, m$. Consider orthogonal projections of $a - x_1$ and $b- x_1$ to $\ls(x_2 -x_1, \ldots, x_m - x_1)$. Both these projections are solutions to the system
        \begin{align*}
        \langle x, x_i - x_1\rangle = \langle a - x_1, x_i - x_1\rangle, \qquad i = 2, \ldots, m.
    \end{align*}
    This system has at most one solution over $x\in \ls(x_2 -x_1, \ldots, x_m - x_1)$. Hence, projections of $a - x_1$ and $b - x_1$ coincide. We also have that $\| a - x_1\| = \|b - x_1\|$, which implies that $a-x_1$ and $b - x_1$ have the same distance to $\ls(x_2 -x_1, \ldots, x_m - x_1)$. Since the dimension of $\ls(x_2 -x_1, \ldots, x_m - x_1)$ is $d - 1$, we get that either $a - x_1 = b - x_1$ or they can be obtained  from each other by the reflection through $\ls(x_2 - x_1, \ldots, x_m - x_1)$. After translating everything by $x_1$, we obtain the claim of the lemma.
    \end{proof}
In fact, if $s$ belongs to $P_i$, then there is just one possibility for $s$. Thus, we can restore all the points in $T$ that belong to the union $\bigcup_{i=1}^{d} P_i$. Let us remove these points from $T$ and update distance profiles by deleting the tuples of distances that correspond to the points that we have removed.

From now on, we may assume that $T$ is disjoint from $\bigcup_{i=1}^d P_i$. Hence, $T$ is also disjoint from the boundary of $C=\cone(x_1, \ldots, x_d)$, not only from its interior (every face of this cone lies on  $P_i$ for some $i$).

For $x\in \mathbb{R}^d$, we define $\rho(x) = \min_{i = 1, \ldots, d}\mathsf{dist}(x, P_i)$. Since $T$ is disjoint from $\bigcup_{i=1}^d P_i$, we have that $\rho(s) > 0$ for every $s\in T$. Moreover, from, say, the distance profile of $(0, z_2, \ldots, z_d)$, we can compute some $\varepsilon > 0$ such that $\rho(s)\ge\varepsilon$ for all $s\in T$. Indeed, recall that from the distance profile of $(0, z_2, \ldots, z_d)$, we get at most 2 potential positions for each point of $T$. This gives us a finite set $T^\prime$ (at most 2 times larger than $T$) which is a superset of $T$. Moreover, as $T$ is disjoint from  $\bigcup_{i=1}^d P_i$, we have that $T^\prime\setminus \bigcup_{i=1}^d P_i \supseteq T $
Thus, we can define $\varepsilon$ as the minimum of $\rho(x)$ over $T^\prime \setminus \bigcup_{i=1}^d P_i \supseteq T$. 

We conclude that $T$ is disjoint from 
\[A_0 = C \cup \{x\in\mathbb{R}^d \mid \rho(x) < \varepsilon\}\]
(moreover, the set $A_0$ is known to us).

 Our reconstruction procedure starts as follows. We go through all distance profiles, and through all tuples of distances in them. Each  tuple gives 2 candidates for a point in $T$ (that can be obtained from each other by the reflection through $P_i$). If one of the candidates lies in $A_0$, we know that we should take the other candidate. In this way, we may possibly uniquely determine some points in $T$. If so, we remove them from $T$ and update our distance profiles.

Which points of $T$ will be found in this way? Those that, for some $i$, fall into $A_0$ under the reflection through $P_i$. Indeed, these are precisely the points that give 2 candidates (when we go through the $i$th distance profile) one of which is in $A_0$. In other words, we will determine all the points that lie in $\bigcup_{i = 1}^{d} \mathsf{Refl}_i(A_0)$, where $\mathsf{Refl}_i$ denotes the reflection through $P_i$. After we remove these points, we know that the remaining $T$ is disjoint from $A_1 = A_0 \cup \bigcup_{i = 1}^{d} \mathsf{Refl}_i(A_0)$.

We then continue in exactly the same way, but with $A_1$ instead of $A_0$, and then with $A_2 =  A_1 \cup \bigcup_{i = 1}^{d} \mathsf{Refl}_i(A_1)$, and so on. It remains to show that all the points of $T$ will be recovered in this way. In other words, we have to argue that each point of $T$ belongs to some $A_i$, where 
\[
A_0 =  C \cup \{x\in\mathbb{R}^d \mid \rho(x) < \varepsilon\}, \qquad A_{i+1} =  A_i \cup \bigcup_{i = 1}^{d} \mathsf{Refl}_i(A_i).
\]
We will show this not only for points in $T$ but for all points in $\mathbb{R}^{d}$. Equivalently, we have to show that for every $x\in \mathbb{R}^d$ there exists a finite sequence of reflections $\tau_1, \ldots, \tau_k\in\{\mathsf{Refl}_1, \ldots, \mathsf{Refl}_{d}\}$ which brings $x$ inside $A_0$, that is, $\tau_k \circ\ldots \circ \tau_1(x) \in A_0$. 

We construct this sequence of reflections as follows.  Let $x$ be outside $A_0$. In particular, $x$ is outside the cone $C =  \cone(z_1, \ldots, z_{d})$. Then there exists a face of this cone such that $C$ is from one side of this face and $x$ is from the other side. Assume that this face belongs to the hyperplane $P_i$. We then reflect $x$ through $P_i$, and repeat this operation until we get inside $A_0$. 
We next show that the above process stops within a finite number of steps. For that, we introduce the quantity $\gamma(x)  = \langle x, z_1\rangle + \ldots +\langle x, z_{d}\rangle$. We claim that with each step, $\gamma(x)$ increases by at least $c\cdot\varepsilon$, where 
\[
c =2\min_{1\le i\le d}\mathsf{dist}(z_i,P_i).
\]
Note that $c > 0$ because, for every $i = 1, \ldots, d$, we have that $z_i\notin P_i$, by the linear independency of $z_1, \ldots, z_{d}$. Also observe that $\gamma(x)  \le |x|\sum_i|z_i|$ and reflections across the subspaces $P_i$ do not change $|x|$. Hence, $\gamma(x)$ cannot increase infinitely many times by some fixed positive amount. 

It remains to show that $\gamma(x)$ increases by at least $c \cdot \varepsilon$ at each step, as claimed. Note that reflection of $x$ across some $P_i$ does not change the scalar product of $x$ with those vectors among $z_1, \ldots, z_{d}$ that belong to $P_i$. The only scalar product that changes is $\langle x, z_i\rangle$, and the only direction which contributes to the change is the one orthogonal to $P_i$. 
Before the reflection, the contribution of this direction to the scalar product was $-d(x, P_i)\cdot d(z_i, P_i)$ (remember that $x$ and $z_i$ were from different sides of $P_i$ because $z_i\in C$). After the reflection, the contribution is the same, but with a positive sign. Thus, the scalar product increases by $2d(x, P_i)\cdot d(z_i, P_i)$. Now, we have $d(z_i, P_i) \ge c/2$ by definition of $c$ and $d(x, P_i)\ge \varepsilon$ if $x$ is not yet in $A_0$.
\end{proof}

\section{On the distinguishing power of one iteration of $d$-WL}
\label{sec:one_round}
\vspace{-0.2cm}
In this section, we discuss a somewhat different strategy to reconstruct $S$. It is clear that if for a point $z\in \mathbb{R}^d$ we are given the distances from it to  $d+1$ points in general position with known coordinates, then the position of $z$ is uniquely determined (see e.g.~Lemma \ref{l.unique}). Since $d$-WL colors $d$-tuples of points in $S$, a natural strategy to recover $S$ is to use the barycenter as an additional point. By Lemma \ref{l.orbits}, we know that distances to the barycenter from points of $S$ can be obtained after one iteration of $d$-WL. 
However, the information that allows us to match $d(z, b)$ to the distances from this $z$ to a $d$-tuple, is readily available only after two iterations of $d$-WL.  It follows that this simple strategy can be used to directly reconstruct $S$ from the second iteration of $d$-WL. We remark that this strategy is similar to the one used in \cite{3d-case} to uniquely determine $S$ up to isometries when the coloring we are initially given corresponds to certain Gram Matrices for $d$-tuples of points. 
Essentially, after one interaction of $d$-WL over this initial data, we obtain enough information to directly determine the location of each $z$ relative to a collection of $d+1$ points. In fact, it is not hard to show that from the first iteration of $d$-WL, applied to the distance graph of $S$, one can compute these Gram Matrices, thus providing an alternative proof that two iterations suffice for distinguishing geometric graphs.  

We show instead that only one iteration suffices. Our approach differs and depends on certain geometric principles that allow us to simplify the problem by conducting an exhaustive search across an exponentially large range of possibilities.

\begin{Theorem}\label{1-iteration}
    For any $d \ge 1$ and for any finite set $S \subseteq \mathbb{R}^d$, knowing the multiset $\leftm \chi_{d,S}^{(1)}(\mathbf{s}) | \mathbf{s} \in S^d \rightm$, 
    one can determine $S$ up to an isometry.
\end{Theorem}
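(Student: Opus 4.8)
The plan is to extract from one round of $d$-WL enough metric information to fix a good set of anchors (including the barycenter), then resolve the remaining ambiguity by a finite search.

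\textbf{Decoding the color.} For $\mathbf{x}=(x_1,\dots,x_d)\in S^d$, the update rule shows that $\chi^{(1)}_{d,S}(\mathbf{x})$ determines the distance matrix of $\mathbf{x}$ together with, for every $y\in S$, the tuple of distance matrices of $\mathbf{x}[y/1],\dots,\mathbf{x}[y/d]$; reading, in each of these, the row indexed by the substituted coordinate recovers the vector $(d(x_1,y),\dots,d(x_d,y))$. Hence $\chi^{(1)}_{d,S}(\mathbf{x})$ determines the pair $\bigl(\text{distance matrix of }\mathbf{x},\ D_{\mathbf{x}}\bigr)$. From the whole multiset $\leftm\chi^{(1)}_{d,S}(\mathbf{s})\mid\mathbf{s}\in S^d\rightm$ I then obtain $n=|S|$ and, by projecting distance profiles onto single coordinates, the multiset $\leftm D_y\mid y\in S\rightm$; by the Barycenter Lemma (Lemma~\ref{l.orbits}), each $D_{x_i}$ (a coordinate projection of $D_{\mathbf{x}}$) together with $\leftm D_y\mid y\in S\rightm$ yields $d(x_i,b)$. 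So every single color in fact determines the full distance matrix of $(b,x_1,\dots,x_d)$ and the profile $D_{\mathbf{x}}$, and the recorded data amounts to the multiset $\leftm\bigl(\text{distance matrix of }(b,\mathbf{s}),\ D_{\mathbf{s}}\bigr)\mid\mathbf{s}\in S^d\rightm$.

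\textbf{Anchoring.} I would pick a color whose tuple $\mathbf{x}$ maximizes $\ad(b,x_1,\dots,x_d)$ and, subject to that, maximizes $\ad(x_1,\dots,x_d)$ (both computable from the recovered distance matrix of $(b,\mathbf{x})$). As in Section~3 the first maximum equals $k:=\ad(S)$. By the Anchor Lemma (Lemma~\ref{l.anchor}) I place $b\mapsto 0$ and $x_i\mapsto z_i$ realizing this distance matrix, getting an isometry $f$ with $f(S)$ of affine dimension $k$, barycenter $0$, and containing $z_1,\dots,z_d$; moreover from $D_{\mathbf{x}}$ I know the multiset of vectors $(d(y,z_1),\dots,d(y,z_d))$ over $y\in S$. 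If $k<d$, the extra maximization forces $z_1,\dots,z_d$ to affinely span $\as(f(S))$, and Lemma~\ref{l.unique} shows that each such vector has a unique realization in $\as(f(S))$; thus $f(S)$ is read off directly and we are done. The remaining case is $k=d$: no $d$ points affinely span $\mathbb{R}^d$, so $z_1,\dots,z_d$ span a hyperplane $H\not\ni 0$, and Lemma~\ref{l.symmetric} gives, for each $y$, at most two candidate positions exchanged by the reflection $\mathsf{Refl}_H$.

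\textbf{The search in the full-dimensional case, and the obstacle.} Here I would enumerate the at most $2^n$ sets $T$ obtained by choosing one candidate for each point with two candidates; $f(S)$ is among them. For each $T$ I would test whether it is consistent with the recovered data: that its barycenter is $0$, that $\leftm d(t,0)\mid t\in T\rightm$ equals the known $\leftm d(y,b)\mid y\in S\rightm$, and that $\leftm\bigl(\text{distance matrix of }(0,\mathbf{t}),\ \text{distance profile of }\mathbf{t}\text{ w.r.t.\ }T\bigr)\mid\mathbf{t}\in T^d\rightm$ equals the recovered multiset above; I would output any $T$ passing. Completeness is clear, since $T=f(S)$ passes. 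The part needing real work is \emph{soundness}: showing that every $T$ passing the test is isometric to $S$. The difficulty is precisely that the reflection ambiguity is per-point, so a priori exponentially many configurations survive, and one must exclude every ``partial flip'' of $f(S)$ across $H$ — a rigidity statement to the effect that flipping any nonempty subset $A$ of points detectably perturbs the barycenter, or the multiset of distances to the origin, or some recorded distance between a flipped and an unflipped point (since $\mathsf{Refl}_H$ preserves distances within $A$ but not between $A$ and its complement). I expect this geometric rigidity argument about subsets of reflections through a fixed hyperplane to be the main obstacle; everything else is bookkeeping on top of the Barycenter, Anchor, Unique, and Symmetric Lemmas already established.
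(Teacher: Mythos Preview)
Your decoding step and the case $\ad(S)<d$ are fine; the gap is exactly where you locate it, and it is genuine. In the full-dimensional case the test you propose for a candidate $T$---that its $\chi^{(1)}_{d}$-multiset coincide with the given one---is precisely the hypothesis of the theorem, so proving that every $T$ passing it is isometric to $S$ \emph{is} the statement you set out to prove. Restricting $T$ to partial flips of $f(S)$ across your hyperplane $H$ is a reduction, but you still owe a direct argument that no nontrivial partial flip preserves the data, and the heuristics you list do not suffice: partial flips can keep the barycenter at $0$ (whenever the signed distances to $H$ of the flipped points sum to zero), and ruling out coincidences in the norm multiset is a combinatorial claim you have not established. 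As written, the full-dimensional case is not proved.

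The paper closes this gap with two ideas absent from your plan, and notably without using the barycenter at all. First, instead of committing to a single anchoring tuple, it ranges over \emph{all} $\mathbf{s}\in S^d$ with $\ad(\mathbf{s})=d-1$; Lemma~\ref{l.distinguish-halfspace} guarantees that among these there is one whose affine span is a \emph{supporting} hyperplane of $S$, i.e., all of $S$ lies in one closed half-space. Second, the discriminating invariant is the scalar $D_S=\sum_{x,y\in S}d(x,y)$, already computable from the level-$0$ multiset. For a supporting $\mathbf{s}$, Lemma~\ref{l.geometricobs} shows that every nontrivial partial flip across $\as(\mathbf{s})$ strictly \emph{increases} the sum of pairwise distances, so among the at most $2^n$ candidates the unique one (up to the global reflection) with $D_T=D_S$ is the correct reconstruction. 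This explicit monotonicity replaces your missing rigidity statement; the price is the outer loop over tuples, which your single-tuple strategy avoids but cannot, on its own, justify.
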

\begin{proof}
    From $\chi_{d, S}^{(1)}(\mathbf{s})$, we can determine the tuple $\mathbf{s} = (s_1, \ldots, s_d)$ up to an isometry, since $\chi_{d, S}^{(1)}(\mathbf{s})$ gives us $\chi_{d, S}^{(0)}(\mathbf{s})$, which is the distance matrix of $\mathbf{s}$. In order to determine $S$, we consider two cases:
    
    \medskip {\bfseries Case 1}: \emph{For all $\mathbf{s}\in S^d$ it holds $\ad(\mathbf{s}) < d - 1$.} Then take $\chi_{d, S}^{(1)}(\mathbf{s})$ with maximal $\ad(\mathbf{s})$, and fix locations of points from $\mathbf{s}$ compatible with the distance specification, according to Lemma \ref{l.anchor}. All points of $S$ belong to $\as(\mathbf{s})$, otherwise we could increase $\ad(\mathbf{s})$. Indeed, since $\ad(\mathbf{s}) < d - 1$, we could throw away one of the points from the tuple without decreasing the dimension and replace it with a point outside $\as(\mathbf{s})$. We now can reconstruct the rest of $S$ uniquely up to an isometry. Indeed, in $\chi_{d, S}^{(1)}(\mathbf{s})$ we are given the multiset of $d$-tuples of distances to $\mathbf{s} = (s_1, \ldots, s_d)$ from the points of $S$, and it remains to use Lemma \ref{l.unique}.

    \medskip {\bfseries Case 2}: \emph{There are tuples with $\as(\mathbf{s}) = d - 1$.} We first observe that from the multiset $\leftm \chi_{d,S}^{(0)}(\mathbf{s}) \, | \, \mathbf{s} \in S^d \rightm$ we can compute the pairwise sum of distances between the points in $S$, i.e.,
    \[
    D_S = \sum_{x\in S}\sum_{y\in S} d(x, y).
    \]
    Indeed, from $\chi_{d,S}^{(0)}((s_1, \ldots, s_d))$, we determine $d(s_1, s_2)$. Hence, we can compute the sum: 
    \[
    \sum_{(s_1, \ldots, s_d)\in S^d} d(s_1, s_2) = D_S\cdot |S|^{d-2}.
    \]
    In our reconstruction of $S$, we go through all $\chi_{d, S}^{(1)}(\mathbf{s})$ with $\ad(\mathbf{s}) = d - 1$. For each of them, we fix positions of the points of the tuple $\mathbf{s}$ in any way that agrees with the distance matrix of this tuple. As before, $\chi_{d, S}^{(1)}(\mathbf{s})$ gives us the multiset of $d$-tuples of distances to $\mathbf{s}$ from the points of $S$. We call ``candidates for $S$ given $\mathbf{s}$" the set of point clouds $S'$ which have one point associated with each such $d$-tuple of distances, and realizing these distances to points in $\mathbf{s}$. We aim to find $\mathbf{s}$ for which, exactly one of these candidates can be isometric to $S$. We start with the following lemma: 
\begin{Lemma} \label{l.distinguish-halfspace}
    For any finite set $S\subseteq \mathbb{R}^d$ with $\ad(S)\ge d - 1$ there exist $x_1, \ldots, x_d\in S$ with $\ad(x_1, \ldots, x_d) = d - 1$ such that all points of $S$ belong to the same half-space with respect to the  hyperplane $\as(x_1, \ldots, x_d)$.
\end{Lemma}
\begin{proof}The general idea of the proof is the following.
    If $\ad(S)=d-1$ then the extreme points of the convex hull of $S$ contain an affinely independent set of cardinality $d$, which then gives the desired $\mathbf{s}$. The half-space condition in the lemma is automatically verified in this case. If $\ad(S)=d$ then to find $\mathbf{s}$ we can proceed by moving a $(d-1)$-plane from infinity towards $S$ until it touches $S$ in at least one point, then iteratively we rotate the plane around the subspace containing the already touched points of $S$, until a new point in $S$ prohibits to continue that rotation. We stop within at most $d$ iterations, when no further rotation is allowed, in which case the plane has an affinely independent subset in common with $S$.

    Formally, we need to find a hyperplane $H$ such that, first, all points of $S$ belong to the same half-space w.r.t.~$H$, and second, $\ad(H\cap S) = d - 1$.

    To start, we need to find a hyperplane $H$ such that, first, all points of $S$ belong to the same half-space w.r.t.~$H$, and second, $H\cap S\neq \varnothing$. For instance, take any non-zero vector $\alpha \in\mathbb{R}^d$, consider $m = \max_{x\in S} \langle \alpha, x\rangle$ and define $H$ by the equation $\langle \alpha, x\rangle = m$. Now, take any $x_1\in H \cap S$.  After translating $S$ by $-x_1$, we may assume that $x_1 = 0$. 

Now, among all hyperplanes $H$ that contain $x_1  = 0$ and satisfy the  condition that all points of $S$ lie in the same half-space w.r.t.~$H$, we take one that contains most points of $S$. We claim that $\ad(H\cap S) = d - 1$ for this $H$. Assume for contradiction that $\ad(H\cap S) < d - 1$. Define $U = \as(H\cap S)$. Since $H\cap S$ contains $x_1 = 0$, we have that $U\subseteq H$ is a linear subspace, and its dimension is less than $d - 1$. Hence, since $\ad(S) \ge d - 1$, there exists $x_2\in S\setminus U$. Note that $x_2\notin H$ because otherwise $x_2$ belongs to $H\cap S \subseteq U$.

Let $\alpha$ be the normal vector to $H$. Since all points of $S$ lie in the same half-space w.r.t.~$H$, w.l.o.g.~we may assume that $\langle \alpha, s\rangle \ge 0$ for all $s\in S$. In particular, $\langle \alpha, x_2\rangle > 0$ because $x_2\notin H$.

Let $U^\bot$ denote the orthogonal complement to $U$. Since $\alpha$ is the normal vector to $H\supseteq U$, we have that $\alpha\in U^\bot$. We need to find some $\beta\in U^\bot$  
which is not a multiple of $\alpha$ but satisfies $\langle \beta, x_2\rangle > 0$. 
Indeed, the dimension of $U$ is at most $d - 2$, and hence the dimension of $U^\bot$   is at least 2. Now, since $\langle\alpha, x_2\rangle > 0$, we can take any  $\beta\in U^\bot$ which is sufficiently close to $\alpha$.

For any $\lambda \ge 0$, let $H_\lambda$ be the hyperplane, defined by $\langle \alpha - \lambda \beta, x\rangle = 0$ (this is a hyperplane and not the whole space because $\beta$ is not a multiple of $\alpha$). We claim that for some $\lambda > 0$, we have that $H_\lambda$ has more points of $S$ than $H$ while still all points of $S$ lie in the same half-space w.r.t.~$H_\lambda$. This would be a contradiction.

Indeed, define $S_\beta = \{s\in S\mid \langle s, \beta\rangle > 0\}$. Note that $S_\beta$, by definition of $\beta$, contains $x_2$ and hence is non-empty. Moreover, $S_\beta$ is disjoint from $H\cap S$. This is because $H\cap S\subseteq U$ and $\beta\in U^\bot$.

Define
\[\lambda = \min_{s\in S_\beta} \frac{\langle \alpha, s\rangle }{\langle \beta, s\rangle}\]
First, $H_\lambda\supseteq U\supseteq H\cap S$ because $\alpha - \lambda\beta\in U^\bot$. Moreover, $H_\lambda$ contains at least one point of $S$ which is not in $H$. Namely, it $H_\lambda$ contains any $s\in S_\beta$, establishing the minimum in the definition of $\lambda$ (and recall that $S_\beta$ is disjoint from $H\cap S$). 
 Indeed, for this $s$ we have $\lambda = \frac{\langle \alpha, s\rangle }{\langle \beta, s\rangle}$. Hence, $\langle \alpha, s\rangle - \lambda \langle \beta, s\rangle = 0 = \langle\alpha - \lambda b, s\rangle \implies s\in H_\lambda$.
 
It remains to show that all points of $S$ lie in the same half-space w.r.t.~$H_\lambda$. More specifically, we will show that $\langle \alpha - \lambda \beta, s\rangle \ge 0$ for all $s\in S$. First, assume that $\langle s, \beta\rangle = 0$. Then $\langle \alpha - \lambda \beta, s\rangle = \langle \alpha, s\rangle \ge 0$ because all points of $S$ lie in the ``non-negative'' half-space w.r.t.~$\alpha$. Second, assume that $\langle s, \beta\rangle > 0$. Then $s\in S_\beta$. Hence, by definition of $\lambda$, we have $\lambda \le \frac{\langle \alpha, s\rangle }{\langle \beta, s\rangle}$. This means that $\langle \alpha - \lambda \beta, s\rangle = \langle \alpha, s\rangle - \lambda \langle \beta, s\rangle\ge 0$, as required.
\end{proof}
Next, consider the following simple geometric observation:
\begin{Lemma}\label{l.geometricobs} 
    Let $P\in \mathbb R^d$ be a hyperplane and consider two points $a,b\in \mathbb R^d\setminus P$ that lie in same half-space w.r.t. $P$. Let $a',b'$ be the reflections of $a,b$ through $P$. Then $d(a',b')=d(a,b)<d(a,b')$.
\end{Lemma}
\begin{proof}
    It suffices to restrict to the plane $\as(\{a,b,a'\})$, and thus we take $d=2$ and up to isometry we may fix $P$ to be the $x$-axis, $a=(0,y)$, $b=(x,y')$, $b'=(x,-y')$, with $y,y'>0$. Then it follows that $d(a,b)^2=x^2 + (y-y')^2<x^2+(y+y')^2=d(a,b')^2$. As reflections are isometries, $d(a,b)=d(a',b')$.
\end{proof}
{\bfseries Claim:} If $\mathbf{s}$ is as in Lemma \ref{l.distinguish-halfspace}, then we have the following:
\begin{itemize}
    \item Exactly one of the candidates for $S$ given $\mathbf{s}$, up to reflection across $\as(\mathbf{s})$, is completely contained in one of the half-spaces determined by $\as(\mathbf{s})$.
    \item A candidate $S'$ as in the previous point is the only one of the candidates for $S$ given $\mathbf{s}$, up to reflection across $\as(\mathbf{s})$, for which $D_{S'}=D_S$.
\end{itemize} 
    To prove the first item, we use only the property that $\ad(\mathbf{s})=d-1$, with which by Lemma \ref{l.symmetric}, each point of a candidate for $S$ given $\mathbf{s}$, has either two possible locations (related by a reflection across $\as(\mathbf{s})$) or a single possible location if it belongs to $\as(\mathbf{s})$. For the second item, let $S'$ be as above and let $S''$ be a candidate for $S$ given $\mathbf{s}$, which is not completely contained in one of the halfspaces determined by $\as(\mathbf{s})$. We now consider each term $d(x',y')$ in the sum defining $D_{S'}$, comparing the corresponding term $d(x'',y'')$ from $D_{S''}$, where $x''=x'$ or is a reflection across $\mathbf{s}$ of $x'$ and similarly for $y''$ and $y'$. By Lemma \ref{l.geometricobs}, either $d(x'',y'')=d(x',y')$ in case $x'',y''$ are in the same half-space determined by $\as(\mathbf{s})$, or $d(x'',y'')>d(x',y')$ otherwise. Summing all terms, by the property of $S',S''$ we find $D_{S'}<D_{S''}$. By the same reasoning with $S$ instead of $S''$, since we are assuming that $\mathbf{s}$ satisfies Lemma \ref{l.geometricobs}, we have $D_S=D_{S'}$, completing the proof of the second item and of the claim.

    The reconstruction of $S$ in Case 2, can therefore be done as follows: we run through all $\mathbf{s}$ such that $\ad(\mathbf{s})=d-1$, and for each such $\mathbf{s}$ we construct all candidates $\widetilde S$ for $S$ given $\mathbf{s}$, and calculate $D_{\widetilde S}$ for each of them. Lemma \ref{l.distinguish-halfspace} guarantees that we run into some  $\mathbf{s}$ for which, up to isometry, only one such $\widetilde{S}$ realizes $D_{\widetilde S}=D_S$. This is our unique reconstruction of $S$. 
\end{proof}

\section{1-WL does not recognize planarity in $\mathbb{R}^3$. }
\label{sec:1wlcounterexample}
\begin{Proposition}
There exist two point clouds in $\mathbb{R}^3$, one of which is planar and the other is not, that are not distinguished by the 1-WL test.
\end{Proposition}
\begin{proof}
One point cloud is a regular hexagon with side of length 1. Every vertex of it has two adjacent vertices at distance 1, another 2 vertices at distance $\sqrt{3}$, and the opposite vertex at distance $2$.
\begin{center}
\begin{tikzpicture}
   \node[inner sep=1pt,circle,draw,fill] (1) at (1.73,0) {};
   \node[inner sep=1pt,circle,draw,fill] (2) at (0.86, 1.5) {};
   \node[inner sep=1pt,circle,draw,fill] (3) at (-0.86, 1.5) {};

   \node[inner sep=1pt,circle,draw,fill] (4) at (-1.73,0) {};

   \node[inner sep=1pt,circle,draw,fill] (5) at (-0.86, -1.5) {};
   \node[inner sep=1pt,circle,draw,fill] (6) at (0.86, -1.5) {};

\draw (1) -- (2) node [midway, above, fill=none] {\small $1$};

\draw (1) -- (6) node [midway,below,  fill=none] {\small $1$};

\draw (1) -- (3) node [midway, above, fill=none] {\small $\sqrt{3}$};

\draw (1) -- (5) node [midway, below,  fill=none] {\small$\sqrt{3}$};

\draw (1) -- (4) node [midway, below,  fill=none] {\small$2$};
   \draw (2) -- (3);
   \draw (3) -- (4);
   \draw (4) -- (5);
      \draw (5) -- (6);
\end{tikzpicture}
\end{center}

We construct a non-planar point cloud with 6 nodes, where, for every point, the multiset of distances to other points is also $\leftm1,1,\sqrt{3}, \sqrt{3}, 2\rightm$. Any such point cloud is not distinguished from the regular hexagon as above by the 1-WL test.

We start from  a regular hexagon with side of length $1/\sqrt{3}$, belonging to some plane. We split its vertices into 2 groups of 3 vertices, where in each group vertices are pairwise non-adjacent. We colour vertices of one group into green and of the other into red.

\begin{center}
\begin{tikzpicture}
      \node[inner sep=2pt,circle,draw,fill=green] (1a) at (5+1,0) {};
   \node[inner sep=2pt,circle,draw,fill=red] (2a) at (5+0.5, 0.86) {};
   \node[inner sep=2pt,circle,draw,fill=green] (3a) at (5-0.5, 0.86) {};

   \node[inner sep=2pt,circle,draw,fill=red] (4a) at (5-1,0) {};

   \node[inner sep=2pt,circle,draw,fill=green] (5a) at (5-0.5, -0.86) {};
   \node[inner sep=2pt,circle,draw,fill=red] (6a) at (5+0.5, -0.86) {};

\draw (1a) -- (6a);
   \draw (2a) -- (3a) node [midway, below,  fill=none] {\tiny$\frac{1}{\sqrt{3}}$};
   \draw (3a) -- (4a);
   \draw (4a) -- (5a);
      \draw (5a) -- (6a);
      \draw (1a) -- (2a);


 \node[inner sep=2pt,circle,draw,fill=green] (1b) at (6,0.5 +0.5) {};
   \node[inner sep=2pt,circle,draw,fill=green] (3b) at (5-0.5, 0.5 +1.36) {};
   \node[inner sep=2pt,circle,draw,fill=green] (5b) at (5-0.5, 0.5 +-0.36) {};

   \node[inner sep=2pt,circle,draw,fill=red,fill opacity=0.3] (2b) at (5+0.5, 0.36-0.5) {};
   \node[inner sep=2pt,circle,draw,fill=red,fill opacity=1] (4b) at (5-1,-0.5-0.5) {};
   \node[inner sep=2pt,circle,draw,fill=red,fill opacity=1] (6b) at (5+0.5, -1.36-0.5) {};

      \draw[dashed] (1a) -- (1b);
      \draw[dashed] (2a) -- (2b);
      \draw[dashed] (3a) -- (3b);
      \draw[dashed] (4a) -- (4b);
      \draw[dashed] (5a) -- (5b);
      \draw[dashed] (6a) -- (6b);

 \draw [decorate,decoration={brace,amplitude=4pt,mirror},xshift=0.5cm,yshift=0pt]
      (5.6,0) -- (5.6,0.5+0.5) node [midway,right,xshift=.1cm] {\tiny$\sqrt{2/3}$};    



\end{tikzpicture}
\end{center}

We lift green vertices above the plain, moving perpendicularly to it, by distance $h = \sqrt{2/3}$. Similarly, we push red vertices down, below the plane, by the same distance $h = \sqrt{2/3}$, again moving perpendicularly to the plain. 

Take any vertex. Distances from it to two other vertices of the same color do not change when we move them up or down. In the initial hexagon, vertices of the same color are at distance $\sqrt{3}$ times bigger than the size of the hexagon, 
In our case, this gives distance $  \sqrt{3} \cdot (1/\sqrt{3}) = 1$. Now, distances to vertices of the opposite colour can be computed by the Pythagorean theorem. Namely, the square of that distance is the square of the initial distance in the hexagon plus $(2h)^2 = 8/3$. This gives two following two numbers:
\[\sqrt{(1/\sqrt{3})^2 + 8/3} = \sqrt{3},\]
\[\sqrt{(2/\sqrt{3})^2 + 8/3} = 2,\]
as required.
\end{proof}

\newpage
\section{2-FWL is not complete in $\mathbb{R}^6$}
\label{sec:2wl}
\begin{Theorem}
There exist two non-isometric point-clouds in $\mathbb{R}^6$ that cannot be distinguished by 2-WL.
\end{Theorem}
\begin{proof}
We use an example from~\cite{DBLP:journals/corr/abs-2112-09992} of two non-isomorphic graphs, not distinguishable\footnote{The 2-WL test for graphs is defined exactly as for points clouds, except that the initial coloring of a pair of nodes $(u, v)$ indicates, whether $u$ and $v$ are equal and whether they are connected by an edge.} by 2-WL . One of them is $L(K_{4,4})$, the line graph of the complete bipartite graph $K_{4,4}$. It can be described as follows: its set of nodes is $\{0, 1, 2, 3\}^2$ (encoding edges of $K_{4,4}$), and we connect two distinct pairs $(a, b)$ and $(c, d)$ by an edge if and only if $a = c$ or $b = d$.

Remarkably, we do not need an explicit description of the second graph. We just need to know that there exists a graph, not isomorphic to $L(K_{4,4})$ but not distinguishable from it by the 2-WL test.

For $a, b, c, \in\mathbb{R}$, we define the \emph{$(a, b, c)$-filling} of a simple undirected graph $G = (V, E)$  as the symmetric matrix $A\in\mathbb{R}^{V\times V}$ given by:
\[A_{uv} = \begin{cases}a & \text{if } u = v, \\ b & \text{if } u \neq v \text{ and } \{u, v\}\in E,\\c &\text{if } u \neq v \text{ and } \{u, v\}\notin E, \end{cases}\qquad u, v\in V.\]

We start by observing that $L(K_{4,4})$ can be ``embedded'' into $\mathbb{R}^6$ in the following sense. There is a mapping from the set of nodes of $L(K_{4,4})$ to $\mathbb{R}^6$ such that adjacent nodes are mapped to points at distance 1, and non-adjacent nodes are mapped to points at distance $\sqrt{2}$. Equivalently, we show that the $(0, 1, \sqrt{2})$-filling of  $L(K_{4,4})$ is the distance matrix of some point cloud $C\subseteq\mathbb{R}^6$.

Namely, we take two regular 3-dimensional tetrahedrons with side of length 1, one in the first 3 coordinates of $\mathbb{R}^6$, and the other in the last 3 coordinates of $\mathbb{R}^6$.  Denoting set of vertices of the first tetrahedron by $\{x_0, x_1, x_2, x_3\}$, and the set of vertices of the second tetrahedron by $\{y_0, y_1, y_2, y_3\}$,  we consider the following embedding of $L(K_{4,4})$ to $\mathbb{R}^6$:
\[(i, j) \mapsto (x_i, y_j), \qquad i, j\in\{0, 1, 2, 3\}.\]
By construction, the distance between $x_i$ and $x_j$ is 1 for $i\neq j$, as well as between $y_i$ and $y_j$. Hence, adjacent pairs of nodes of $L(K_{4,4})$ (corresponding to pairs, differing in exactly one position) are mapped to points at distance 1. Likewise, non-adjacent  pairs of nodes (corresponding to pairs that differ in both positions), by Pythagorean theorem, are mapped to points at distance $\sqrt{2}$.

It is now enough to demonstrate that for   
any graph $G$ which is 2-WL-equivalent to $L(K_{4,4})$, it also holds that its $(0, 1,\sqrt{2})$-filling is the distance matrix of some point could $C^\prime\subseteq\mathbb{R}^6$. Indeed, we then can take $G$, not isomorphic to  $L(K_{4,4})$ but 2-WL-equivalent to it. The  point clouds $C, C^\prime\subseteq \mathbb{R}^6$, corresponding to $(0, 1,\sqrt{2})$-fillings of $L(K_{4,4})$ and $G$ will be isometric but not distinguishable by 2-WL. 

Assume that two tetrahedrons in the construction of the embedding of $L(K_{4,4})$ have their center at the origin. Then all points of $C = \{(x_i, y_j) \mid i,j\in\{0, 1, 2, 3\}\}$ are at the same distance $a = \sqrt{2}\cdot d$ from the origin, where $d$ is the distance from the center of a regular tetrahedron with side 1 to its vertices.

We will consider only point clouds in $\mathbb{R}^6$ where all points are at distance $a$ from the origin. Under this assumption, the distance between two points  is uniquely determined by their scalar product, and vice versa. This allows us to work with Gram matrices instead of distance matrices in our argument. 

More specifically, since the distance matrix of $C$ is the $(0, 1, \sqrt{2})$-filling of $L(K_{4,4})$, we also have that the Gram matrix of $C$ is the $(a^2, b, c)$-filling of $L(K_{4,4})$ for some $b, c\in\mathbb{R}$. Now, take any $G$ which is 2-WL equivalent to $L(K_{4,4})$. Instead of showing that the $(0, 1, \sqrt{2})$-filling of $G$ is the distance matrix of some point cloud $C^\prime\subseteq \mathbb{R}^6$, it is enough to show that the $(a^2, b, c)$-filling of $G$  is the Gram matrix of some point cloud $C^\prime\subseteq \mathbb{R}^6$.

We rely on the following lemma.
\begin{Lemma}
\label{lem_spectral}
Let $G_1$ and $G_2$ be two 2-WL equivalent simple undirected graphs. Then for any $a, b, c \in\mathbb{R}$, the $(a, b, c)$-fillings of $G_1$ and $G_2$ have the same spectra.
\end{Lemma}
We apply it to $M$, the Gram matrix of $C$, and $M^\prime$, the $(a^2, b, c)$-filling of $G$. Due to  the lemma, they have the same spectra.  Since $M$ is the Gram matrix of a 6-dimensional point cloud with $16$ points, we have
\[M= X^T X\] 
for some $X\in\mathbb{R}^{6\times 16}$. Therefore, $M$ is positive semi-definite and its rank is at most $6$. Therefore, all eigenvalues of $M$ are non-negative, and at most $6$ of them are  positive. By Lemma \ref{lem_spectral}, the same is true for $M^\prime$. In particular, for some orthogonal matrix $Y$, we have that $Y^T M^\prime Y$ is a diagonal matrix, with 6 non-negative numbers and 10 zeros on the diagonal.  Hence, $Y^T M^\prime Y$ is the Gram matrix of some point cloud of size 16 in $\mathbb{R}^6$. Namely, this is the point cloud, consisting of $6$ basic vectors, multiplied by square roots of 6 largest eigenvalues of $M^\prime$, together with 10 repetitions of the zero vector. This allows us to write:
\[Y^T M^\prime Y = Z^T Z\]
for some $Z\in\mathbb{R}^{6\times 16}$. By orthogonality of $Y$, we can now write:
\[M^\prime = (ZY^T)^T Z Y^T,\]
which means that $M^\prime$ is the gram matrix of the set of columns of $Z Y^T$, and columns of this matrix are $6$-dimensional. 
\begin{proof}[Proof of Lemma \ref{lem_spectral}]
We first observe the following: for any graph $G = (V, E)$, and for any $a, b, c \in\mathbb{R}$, if $A$ is the $(a, b, c)$-filling of $G$, then $\chi^{(t)}(u, v)$, the 2-WL label of $(u, v)\in V^2$ after $t$ iterations, uniquely determines $A^{2^t}_{u,v}$. We establish this by induction on $t$. For $t = 0$, we have that $\chi^{(t)}(u, v)$ uniquely determines $A_{u,v}$ because it contains information, whether $u = v$ and whether $u$ and $v$ are connected by an edge (and the value of $A_{u,v}$ is defined by this information). We now perform the induction step. By definition, $\chi^{(t+ 1)}(u, v)$ uniquely determines the multiset $\leftm(\chi^{(t)}(w, v), \chi^{(t)}(u, w)) \mid w\in V\rightm$, which, by the induciton hypothesis, determines the sum
\[\sum\limits_{w\in V} A^{2^t}_{w,v} A^{2^t}_{u,w} = A^{2^{t + 1}}_{u,w},\]
as required. 

This observation implies that the multiset of 2-WL labels of $G$ after $t$ iterations uniquely determines the trace of $A^{2^t}$ (we have to note here that from this multiset, we can extract the multiset $\leftm \chi^{(t)}(v, v) \mid v\in V\rightm$ because the 2-WL label of a pair of nodes determines, whether these two nodes are equal).

We now let $A_1$ and $A_2$ be the $(a, b, c)$-fillings of two 2-WL equivalent graphs $G_1$ and $G_2$. By the remark from the previous paragraph, $A_1^{2^t}$ and $A_2^{2^t}$ have the same trace for any $t\ge 0$. If $\lambda_{11}\ge \lambda_{12}\ge\ldots  \ge\lambda_{1n}$ and $\lambda_{21}\ge \lambda_{22}\ge\ldots  \ge\lambda_{2n}$ are the spectra of $A_1$ and $A_2$, respectively, we obtain:
\begin{equation}
\label{eq_lambda}
\lambda^{2^t}_{11} + \lambda^{2^t}_{12} + \ldots + \lambda^{2^t}_{1n} = \lambda^{2^t}_{21} + \lambda^{2^t}_{22} + \ldots + \lambda^{2^t}_{2n}
\end{equation}
for every $t \ge 0$. Without loss of generality, we may assume that all eigenvalues are positive (if not, add a large positive number $\Delta$ to $a$, the same $\Delta$ will be added to all eigenvalues of both matrices). We establish that $\lambda_{1i}= \lambda_{2i}$ for all $i\in\{1, 2,\ldots, n\}$. Indeed, if not, take the smallest $i$ with  $\lambda_{1i}\neq \lambda_{2i}$. Cancelling out the first $i-1$ equal terms, we obtain that one of the sides of \eqref{eq_lambda} grows faster than the other as $t\to\infty$, a contradiction.
\end{proof}
\end{proof}

\section{Conclusion and open problems}

For any positive integer $d$, one can define $k_d$ as the minimal positive integer $k$ such that the geometric $k$-WL test is complete for $\mathbb{R}^d$. Our main result is an upper bound $k_d \le d -1$ for all $d\ge 2$. For $d = 2, 3$, the exact values of $k_d$ are $k_2 = 1, k_3 = 2$. It is  open if $k_4 = 2$ or $k_5 = 2$, and we have shown that $k_6 > 2$. More generally, it is interesting to find out the rate of growth of $k_d$ as a function of $d$. It can be shown that $k_d = \Omega(d)$~\cite{test1}, and we conjecture that $k_d = d + O(1)$.

Our results imply that standard MPNNs, equivalent to the 1-WL test, are, in principle,  capable of learning all geometric information in planar point clouds, in just 3 iterations. It has to be noted, however, that the update time of MPNNs is quadratic (in the number of points) in this case as they are used over complete distance graphs. This puts more challenges for MPNNs in this context than in their standard applications, where they are run over relatively sparse graphs. The situation becomes more complicated in the 3-dimensional case, with the update time of 2WL-like GNNs becoming cubic. This questions scalability of such architectures for applications like learning properties of long molecules. A major open problem is to come up with an architecture for $\mathbb{R}^3$ with faster update time without sacrificing completeness.

\paragraph{Acknowledgments}
Barcel\'{o}, Kozachinskiy, Petrache, and Rojas are funded by the National Center for Artificial Intelligence CENIA
FB210017, Basal ANID. Barcel\'{o} is funded by ANID Millennium Science Initiative Program ICN17002. Kozachinskiy
is funded by ANID Fondecyt Iniciaci\'{o}n 11250060.


\end{document}